\newtheorem{definition}{Definition}[section]
\newtheorem{theorem}{Theorem}
\newtheorem{example}{Example}
\newtheorem{remark}{Remark}
\newtheorem{lemma}{Lemma}
\begin{document}
%%
%% The "title" command has an optional parameter,
%% allowing the author to define a "short title" to be used in page headers.
\title{Can Querying for Bias Leak Protected Attributes? \\ 
Achieving Privacy With Smooth Sensitivity}

\author{Faisal Hamman}
\affiliation{%
  \institution{University of Maryland}
  \streetaddress{}
  \city{}
  \country{}}
\email{fhamman@umd.edu}

\author{Jiahao Chen}
\affiliation{%
  \institution{Responsible AI LLC }
  \city{}
  \country{}
}
\email{jiahao@responsibleai.tech}

\author{Sanghamitra Dutta}
\affiliation{%
 \institution{University of Maryland}
 \streetaddress{}
 \city{}
 \state{}
 \country{}}
 \email{sanghamd@umd.edu }

% \author{Huifen Chan}
% \affiliation{%
%   \institution{Tsinghua University}
%   \streetaddress{30 Shuangqing Rd}
%   \city{Haidian Qu}
%   \state{Beijing Shi}
%   \country{China}}

% \author{Charles Palmer}
% \affiliation{%
%   \institution{Palmer Research Laboratories}
%   \streetaddress{8600 Datapoint Drive}
%   \city{San Antonio}
%   \state{Texas}
%   \country{USA}
%   \postcode{78229}}
% \email{cpalmer@prl.com}

% \author{John Smith}
% \affiliation{%
%   \institution{The Th{\o}rv{\"a}ld Group}
%   \streetaddress{1 Th{\o}rv{\"a}ld Circle}
%   \city{Hekla}
%   \country{Iceland}}
% \email{jsmith@affiliation.org}

% \author{Julius P. Kumquat}
% \affiliation{%
%   \institution{The Kumquat Consortium}
%   \city{New York}
%   \country{USA}}
% \email{jpkumquat@consortium.net}

%%
%% By default, the full list of authors will be used in the page
%% headers. Often, this list is too long, and will overlap
%% other information printed in the page headers. This command allows
%% the author to define a more concise list
%% of authors' names for this purpose.
\renewcommand{\shortauthors}{Faisal Hamman, Jiahao Chen, and Sanghamitra Dutta}

%%
%% The abstract is a short summary of the work to be presented in the
%% article.
\begin{abstract}
   Existing regulations often prohibit model developers from accessing protected attributes (gender, race, etc.) during training. This leads to scenarios where fairness assessments might need to be done on populations without knowing their memberships in protected groups. In such scenarios, institutions often adopt a separation between the model developers (who train their models with no access to the protected attributes) and a compliance team (who may have access to the entire dataset solely for auditing purposes). However, the model developers might be allowed to test their models for disparity by querying the compliance team for group fairness metrics. In this paper, we first demonstrate that simply querying for fairness metrics, such as, statistical parity and equalized odds can leak the protected attributes of individuals to the model developers. We demonstrate that there always exist strategies by which the model developers can identify the protected attribute of a targeted individual in the test dataset from just a single query. Furthermore, we show that one can reconstruct the protected attributes of \emph{all} the individuals from $\mathcal{O}(N_k \log( n /N_k))$ queries when $N_k\ll n$ using techniques from compressed sensing ($n$ is the size of the test dataset and $N_k$ is the size of smallest group therein). Our results pose an interesting debate in algorithmic fairness: Should querying for fairness metrics be viewed as a neutral-valued solution to ensure compliance with regulations? Or, does it constitute a violation of regulations and privacy if the number of queries answered is enough for the model developers to identify the protected attributes of specific individuals? To address this supposed violation of regulations and privacy, we also propose Attribute-Conceal, a novel technique that achieves differential privacy by calibrating noise to the smooth sensitivity of our bias query function, outperforming naive techniques such as the Laplace mechanism. We also include experimental results on the Adult dataset and synthetic dataset (broad range of parameters).
\end{abstract}
%%
%% The code below is generated by the tool at http://dl.acm.org/ccs.cfm.
%% Please copy and paste the code instead of the example below.
%%
\begin{CCSXML}

<ccs2012>
   <concept>
       <concept_id>10003456.10003462.10003588.10003589</concept_id>
       <concept_desc>Social and professional topics~Governmental regulations</concept_desc>
       <concept_significance>500</concept_significance>
       </concept>
   <concept>
       <concept_id>10010147.10010178.10010216</concept_id>
       <concept_desc>Computing methodologies~Philosophical/theoretical foundations of artificial intelligence</concept_desc>
       <concept_significance>500</concept_significance>
       </concept>
   <concept>
       <concept_id>10003456.10010927</concept_id>
       <concept_desc>Social and professional topics~User characteristics</concept_desc>
       <concept_significance>300</concept_significance>
       </concept>
   <concept>
       <concept_id>10002944.10011123.10011130</concept_id>
       <concept_desc>General and reference~Evaluation</concept_desc>
       <concept_significance>500</concept_significance>
       </concept>
   <concept>
       <concept_id>10002978.10002991.10002995</concept_id>
       <concept_desc>Security and privacy~Privacy-preserving protocols</concept_desc>
       <concept_significance>500</concept_significance>
       </concept>
 </ccs2012>
\end{CCSXML}
\ccsdesc[500]{Social and professional topics~Governmental regulations}
\ccsdesc[500]{Computing methodologies~Philosophical/theoretical foundations of artificial intelligence}
\ccsdesc[300]{Social and professional topics~User characteristics}
\ccsdesc[500]{General and reference~Evaluation}
\ccsdesc[500]{Security and privacy~Privacy-preserving protocols}
%% Keywords. The author(s) should pick words that accurately describe
%% the work being presented. Separate the keywords with commas.
\keywords{algorithmic fairness, compliance, compressed sensing, differential privacy, machine learning.}
% \received{20 February 2007}
% \received[revised]{12 March 2009}
% \received[accepted]{5 June 2009}
\maketitle
\section{Introduction}\label{intro}

The ethical goal of algorithmic fairness~\cite{Barocas2016BigDD, megansmith} is closely tied to the legal frameworks of both anti-discrimination and privacy. For instance, Title VII of the Civil Rights Act of 1964 \cite{Barocas2016BigDD} introduces two different notions of unfairness, namely, disparate impact~\cite{10.2307/1072940}, and disparate treatment~\cite{zimmer1995emerging}, which are often at odds with each other~\cite{Barocas2016BigDD}.  %Disparate impact occurs when policies, procedures, and rules have adverse effect on the members of specific groups of people disproportionately based on their protected attributes, such as race, gender, nationality, religion, etc. On the other hand, disparate treatment occurs when there are unequal decisions towards individuals explicitly because of a protected attribute. %Disparate treatment is viewed as an intentional decision to treat individuals differently based on their gender, race, or other protected attributes.
It is widely believed that a machine learning model can avoid violating disparate treatment (and privacy concerns) if the model does not \emph{explicitly} use the protected attributes \cite{Barocas2016BigDD}. However, it has been demonstrated that even if no protected attributes are explicitly used during training, a model might still be held liable for disparate impact, due to proxies of the protected attributes among other attributes in the dataset \cite{10.2307/23015965}. In fact, existing literature on algorithmic fairness demonstrates that \emph{leveraging protected attributes during training can essentially prevent disparate impact}, e.g., by minimizing a fairness metric as a regularizer with the loss function during training \cite{10.1145/2939672.2945386,kamishima2011fairness}. Thus, mitigating disparate impact often seems to be at odds with disparate treatment \cite{lipton2019does} (and privacy), depending on whether the protected attribute is being explicitly used or not. 

One potential resolution (still debated~\cite{pmlr-v80-kilbertus18a}) is to use the protected attributes only during training to mitigate disparate treatment but not after deployment. Nonetheless, \emph{the use of protected attributes during model training remains to be a source of active debate} and contention for various applications \cite{2019,lipton2019does}. On one hand, using protected attributes during training enables one to actively audit and account for biases, as well as understand how specific groups of people are affected. On the other hand, these protected attributes can also be used maliciously, e.g., to exacerbate discrimination~\cite{roberts2014protecting}. An interesting example arises where the protected attributes can even be used to ``mask'' discrimination \cite{dutta2021fairness,datta2017use,Gerrymandering,dutta2020information}, e.g., an expensive housing Ad is shown to only high-income White individuals and low-income Black individuals but not to low-income White individuals and high-income Black individuals (assuming an equal proportion of all these four sub-groups) \cite{dutta2021fairness}. The decision is clearly discriminatory against high-income Black individuals for whom the Ad is relevant and yet they do not get to see it. This discrimination is masked since the decision might still satisfy statistical independence between the two races.
%Ad is an XOR of the race and income of an individual. .... <elaborate>

%\cite{mehrabi2019survey, lipton2019does}

In several applications, e.g., in finance, anti-discrimination and privacy regulations adopt a stance that completely prohibits the use of protected attributes during training. In the finance domain, institutions cannot  ask about an individual’s race for credit decisioning, while at the same time having to prove that their decisions are non-discriminatory \cite{2019J}. The Apple Card credit card was recently accused of discriminatory credit decisioning  since women received lower credit limits than equally qualified men, despite not using the gender explicitly during training \cite{NeilApple}.

Fairness assessment of these models is extremely challenging when the protected attributes are unavailable\footnote{While  \cite{gupta2018proxy} suggests that proxies may be used to approximate missing protected attributes for bias assessment, others~\cite{kallus2020assessing,2019J} argue that  proxies may be problematic since they often underestimate bias.}.
%It enables for an optimal trade-off between accuracy and some fairness constraints when used as regularizers during training \cite{mehrabi2019survey, lipton2019does}. It's also easier to understand how specific individuals are affected when these protected attributes are used for audits. Other pre-processing and post-processing using the protected attributes help mitigate bias \cite{mehrabi2019survey}. 
%While these protected attributes can be used to correct biases, they can also be used to exacerbate discrimination in other situations. In the United States, for example, Asian students were discriminated against in higher education under the pretext of affirmative action \cite{Anemona}. Furthermore, there are cases where protected attribute knowledge can be used to mask disparities \cite{dutta2021fairness}. To demonstrate that the algorithmic decisions comply with these regulations, one needs to do a fairness assessment of these models on populations without knowing their memberships in protected classes. 
To address this, institutions often adopt a separation between the model developers and the compliance team \cite{2019J}. The compliance team is responsible for ensuring methods do not violate anti-discrimination and privacy laws. As a result, the compliance team has access to the entire dataset, including the attributes protected by law (i.e., race, gender, etc.)~\cite{2019J,chen2018fair}. Only a subset of the data fields is visible to the model developers who train these models. The compliance team determines which attributes the model developers are allowed to see and use to train their models. Clearly, the model developers would not have access to the protected attributes. For fairness assessment, the model developers may however query the compliance team for certain group fairness metrics, e.g., statistical parity, equalized odds, etc. The model developers can then choose which model to deploy or discard based on the query responses.

In this paper, we first demonstrate that simply querying for fairness metrics (bias) can also leak the protected attributes of targeted individuals. Furthermore, we demonstrate that there exist strategies by which the model developers can always identify the protected attributes of \emph{all} the individuals in the test dataset. We collectively refer to these strategies as \emph{Attribute-Reveal}. Our finding poses an interesting debate in the policy aspects of fairness and privacy: \emph{Should querying for fairness metrics be viewed as a neutral-valued solution to ensure compliance with regulations? Or, does it constitute a violation of regulations and privacy, particularly if the number of queries answered is enough for the model developers to identify the protected attributes of specific individuals?} To address this supposed violation of regulations and privacy, we also propose \emph{Attribute-Conceal}, a novel differentially-private technique to answer queries without leaking the protected attributes. 
 
To summarize, our main contributions are as follows:\\
 \textbf{1. Demonstrate that querying for bias can leak protected attributes:} We first demonstrate that querying for fairness metrics, e.g., statistical parity, or equalized odds, can indeed leak the protected attributes of individuals. In Theorem \ref{Recovering Sensitive Attributes $A$ using Linear Equations}, we provide the general criterion for reconstructing the protected attributes of all the individuals in the test dataset by querying for the statistical parity of several models (reduces to a linear system of equations).\\
\textbf{2. Leverage compressed sensing to reconstruct protected attributes with fewer queries:} Building on our initial result (Theorem \ref{Recovering Sensitive Attributes $A$ using Linear Equations}), we then demonstrate how protected attributes of individuals can be leaked using a much smaller number of statistical parity queries, provided the size of one group is much smaller than the other (see Theorem \ref{Recovering Sensitive Attributes using Compressed Sensing}). Our findings also extend to the absolute value of statistical parity (see Section \ref{abssec}), as well as, other fairness metrics, e.g., equalized odds or their absolute values. We collectively refer to these proposed reconstruction strategies as \emph{Attribute-Reveal}.\\
\textbf{3. Propose \emph{Attribute-Conceal}, a novel technique that achieves differential privacy by calibrating noise to the smooth sensitivity of our bias query function:} To avoid leaking protected attributes, we propose \emph{Attribute-Conceal}, a technique that answers fairness queries in an $\epsilon-$differentially-private manner (see Section \ref{SmoSen}). Since calibrating noise to global sensitivity (e.g., using the Laplace mechanism) can often hurt the utility of the answered query (because the noise becomes too high), we employ the \textbf{smooth sensitivity} framework~\cite{10.1145/1250790.1250803}, which adds dataset-specific additive noise to achieve differential privacy (see Theorem \ref{caucy} in Section \ref{SmoSen}).\\
\textbf{4. Experiments:} To complement our theoretical results, we also provide experimental results on the Adult dataset~\cite{Dua:2019a} as well as perform simulations on synthetic data for a broad range of parameters. We demonstrate how Attribute-Reveal reconstructs the protected attributes, and how Attribute-Conceal prevents the reconstruction. We also compare Attribute-Conceal to other naive differential privacy techniques such as the Laplace mechanism. \\

\noindent \textbf{Related Works:}
Algorithmic fairness is an active area of research~\cite{varshney2022trustworthy,whittaker2019disability,berk2017fairness,mehrabi2019survey,10.2307/24833783,10.1145/2783258.2783311,RePEc:inm:ormnsc:v:65:y:2019:i:7:p:2966-2981,kamishima2011fairness,NIPS2016_9d268236,2017LL,dwork2011fairness,20192ee,lahoti2019ifair,datta2017use,dutta2021fairness,dutta2020information,varshney2019trustworthy,dutta2020there,coston2019fair,gupta2018proxy,2019J,veldanda2023hyper,alghamdi2022beyond,Gerrymandering} that is receiving increasing attention. Many of these techniques assume that the protected attributes are available during training, which is not always allowed in practice. In several applications, such as credit or loan decisioning \cite{chen2018fair,2019J}, the use of protected attributes during training is restricted by law. Our work is closely related to a body of work that addresses \emph{fairness without access to protected attributes} \cite{gupta2018proxy,2019J,veldanda2023hyper,coston2019fair,sohoni2020no}, often using proxies to estimate the protected attributes. Our work lies in an area that relies on trusted third parties who have access to protected data necessary for improving fairness. For instance, \cite{doi:10.1177/2053951717743530,pmlr-v80-kilbertus18a,hu2019distributed} assume that the model has access to the protected attributes in an encrypted form via secure multi-party computation. Later, \cite{pmlr-v97-jagielski19a} noted that secure multi-party computation technique does not protect protected attributes from leaking, employing differential privacy \cite{10.1007/11681878_14} to learn fair models. \cite{2020LLLL} uses a fully homomorphic encryption scheme, allowing model developers to train models and test them for bias without revealing the protected attributes. More recent works~\cite{10.1145/3314183.3323847, 10.1145/3308560.3317584,alabi2021cost,bagdasaryan2019differential} provide schemes that allow the release of protected attributes privately for learning non-discriminatory predictors. \cite{juarez2022you}  proposes a differentially
private mechanism to measure differences in performance across groups while protecting the privacy of group membership in a federated setting. 

Our work instead addresses a novel problem statement: Can querying for fairness metrics leak protected attributes, and if so, how can we leverage smooth sensitivity to prevent this leakage. Our problem setup also differs from existing works in attribute inference attacks~\cite{DBLP:journals/corr/abs-2103-07853,10.1145/2810103.2813677,DBLP:journals/corr/abs-2005-08679,10.1145/3436755} where the focus is on \emph{learning} protected information in training data from model outputs  using supervised learning (we do not use group membership labels from past data). An interesting related work \cite{yan2022active} studies query-based auditing algorithms to estimate the statistical parity of ML models. Another related work is \cite{shamsabadiwashing}, which focuses on the issue of fair washing, where manipulation techniques are utilized to mask unfairness when presenting the model's explanations to an auditor.

\section{Problem Setup}\label{setting}
%\subsection{Preliminaries}
\subsection{Preliminaries}
Let $\mathcal{S} = (X, Y, A)$ represent a test dataset consisting of $n$ samples, where $A = (a_1, a_2,..., a_j,..., a_n)$ denotes the protected/sensitive attributes (binary), $X = (x_1, x_2,..., x_j,..., x_n)$ denotes the model inputs with each $x_j \in \mathbb{R}^d$, and $Y = (y_1, y_2,..., y_j,..., y_n)$ being the corresponding true labels used for supervised learning.
% Let $\mathcal{S}=(X,Y,A)$ be a test dataset of size $n$ with $A=(a_1,a_2,\ldots,a_j,\ldots,a_n)$ being the protected/sensitive attributes (binary), $X=(x_1,x_2,\ldots,x_j,\ldots,x_n)$ being the model inputs such that each $x_j \in \mathbb{R}^d$, and $Y=(y_1,y_2,\ldots,y_j,\ldots,y_n)$ being the corresponding true labels for supervised learning.
We let $a_j=1$ denote the advantaged group and $a_j=0$ denote the disadvantaged group. %The true labels $Y \in \{0,1\}^n$, where $y_j=1$ represents a favorable outcome, such as loan approval, and $y_j=0$ represents an unfavorable outcome. We assume each data point in the dataset is characterized by a single protected attribute and true label, i.e, $\mathcal{S}=\{(x_j,y_j,a_j)\}_{j=1}^n$.
We consider two types of classifiers: binary and logistic classifiers. The binary classifiers are represented by the function  $h(\cdot): \mathbb{R}^d \rightarrow \{0,1\}.$ %, $ $i \in [m]$, where $[m]$ denotes the set $\{1, 2, . . . ,m\}$ for $m \in \mathbb{N}$. %Given an input $x$, the $i$-th classifier $h_i(\cdot)$ outputs $1$ if accepted and $0$ if rejected. 
For a logistic classifier  $h(\cdot): \mathbb{R}^d \rightarrow [0,1],$ the output corresponds to the probability of input $x$ being accepted. When we have several classifiers, we let $h_i(x_j)$ represent the $i$-th classifier's output to the input $x_j$ (input feature vector of the $j$-th individual in the dataset) for $i \in [m]$, where $[m]=\{1, 2, . . . ,m\}$ for a positive integer $m$. Let $h_i(X)$ represent the $i$-th classifier's outputs to all individuals in the dataset, i.e, $h_i(X)=\big(h_i(x_1),...,h_i(x_j),...,h_i(x_n)\big)$. We let $N_0$ and $N_1$ be the size of the disadvantaged and advantaged group, i.e., $N_0=\sum_{j \in [n]} \mathbbm{1} \{a_j=0\}$ and $N_1=\sum_{j \in [n]} \mathbbm{1} \{a_j=1\}$. Note that $N_1+N_0=n$, the size of the test dataset. \\

\noindent \textbf{Review of Relevant Group Fairness Metrics:}
\begin{definition}[Statistical Parity Gap ($SP_i$)]
  Statistical parity gap ($SP_i$) is defined as the difference in expected outcome between the advantaged and disadvantaged groups, i.e.,
% \begin{align}\label{defn:Statistical parity gap}
% SP_i & = \mathbb{E}(h_i(\cdot)|A=1)-\mathbb{E}(h_i(\cdot)|A=0) \nonumber \\ 
% & = \frac{1}{N_1} \sum_{\{j|a_j=1\}} h_i(x_j)-\frac{1}{N_0}\sum_{\{j|a_j=0\}} h_i(x_j).
% \end{align}
\begin{equation}
\label{defn:Statistical parity gap}
SP_i = \frac{1}{N_1} \sum_{\{j|a_j=1\}} h_i(x_j)-\frac{1}{N_0}\sum_{\{j|a_j=0\}} h_i(x_j).
\end{equation}
\end{definition}
 \begin{definition}[Equal Opportunity Gap ($EO_i$)]
 Equal opportunity gap ($EO_i$) is defined as:
    % \begin{align}\label{defn:Equality of opportunity gap}
    %  EO_i & = \mathbb{E}(h_i(\cdot)|A=1,Y=1)-\mathbb{E}(h_i(\cdot)|A=0,Y=1) \nonumber \\
    %  & = \frac{1}{N_{11}} \sum_{\{j|y_j=1,a_j=1\}} h_i(x_j)-\frac{1}{ N_{10}}\sum_{\{j|y_j=1,a_j=0\}} h_i(x_j)
    % \end{align}
    \begin{equation}\label{defn:Equality of opportunity gap}
     EO_i = \frac{1}{N_{11}} \sum_{\{j|y_j=1,a_j=1\}} h_i(x_j)-\frac{1}{ N_{10}}\sum_{\{j|y_j=1,a_j=0\}} h_i(x_j),
    \end{equation}
where $ N_{11}= \sum_{j \in [n]} \mathbbm{1} \{ y_j{=}1,a_j{=}1\}$, $N_{10}= \sum_{j \in [n]} \mathbbm{1} \{ y_j{=}1,a_j{=}0\}$.
  \end{definition}
We also denote the absolute values of statistical parity and equal opportunity gap as $|SP_i|$ and $|EO_i|$ respectively. 

\begin{remark}
We note that although we only define statistical parity and equal opportunity, our techniques can be extended to several other group fairness measures, such as equalized odds, predictive parity, etc. as well as their absolute values. We further discuss this in Remark \ref{rem:extend}.
\end{remark}

\subsection{Problem Statement}
  Institutions often adopt a separation between the model developers and the compliance team to ensure anti-discrimination and privacy laws are met. The model developers do not have access to protected attributes and therefore cannot use them for training. The compliance team, however, has access to the entire dataset, but only for auditing purposes. In our setting, the model developers train $m$ different classifiers $h_i(\cdot)$ with $i\in [m]$ on the training dataset. For the fairness assessment of these models before deployment, the model developers are allowed to test for algorithmic bias by querying the compliance team for certain fairness metrics on the test dataset $\mathcal{S}=(X,Y,A)$. Note that the classifier $h_i(\cdot)$ is only a function of the input $x_j$ and not the protected attributes $a_j$. The fairness metrics that the model developers can query for includes the statistical parity gap, and equal opportunity gap as well as their absolute values (see system model in Figure \ref{fig:sysmodel}). % of these measures, i.e., Definitions~\ref{defn:Absolute statistical parity gap} and \ref{defn:Absolute Equality of opportunity gap}. 
 The main question that we ask in this work is: Is this technique of querying for fairness metrics effective in keeping the protected attributes hidden from the model developers? Or in general, \textit{does querying for fairness leak protected attributes?} And if so, how can one answer queries without leaking protected attributes?
%\section{How Can Querying for Fairness Metrics Leak Protected Attributes?}

\begin{remark}
    The approach outlined in this paper can be extended to a scenario with an institution (that trains a model without access to protected attributes) and an external fairness auditing team, e.g., in \cite{yan2022active}. The auditing team has access to the entire data for the purpose of evaluating the bias of the models and is responsible for informing the model developers on whether their deployed model passes fairness tests based on some fairness metric. In this setting, our concern is whether auditing for fairness compromise and leak the protected attributes to the institution.
\end{remark}
\begin{figure}[t]
\includegraphics[scale=0.28]{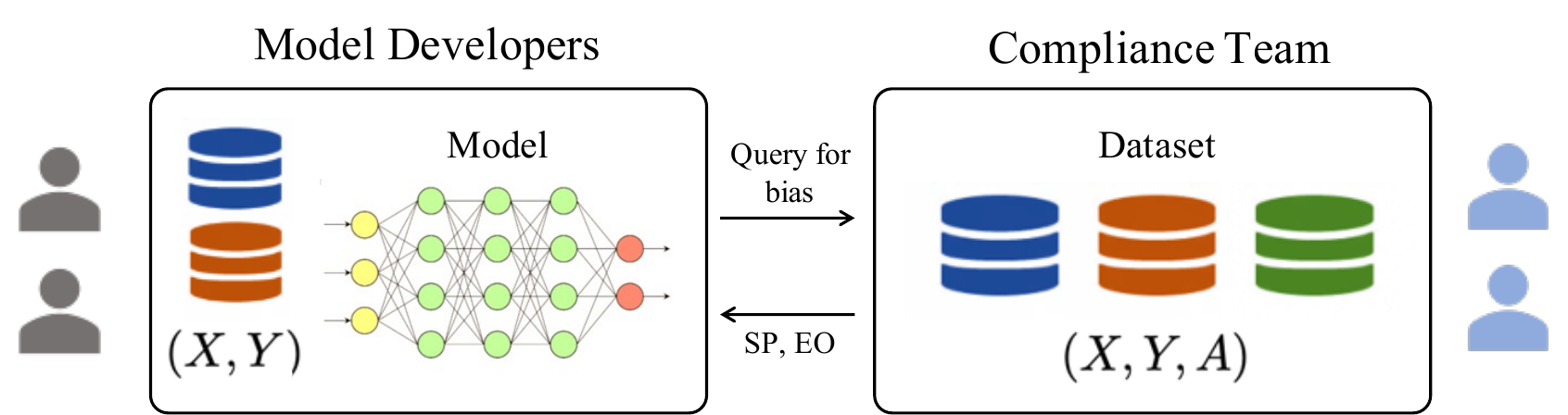}
\caption{Illustrates an institutional separation  between model developers and compliance team for ensuring fair and privacy-compliant machine learning models. Model developers train classifiers on the training dataset, but without access to protected attributes. Compliance team has access to the entire dataset for auditing purposes and is queried by model developers for fairness metrics. }
\label{fig:sysmodel}
\end{figure}
\section{Attribute-Reveal: Querying For Bias Leaks Protected Attributes}
\subsection{Demonstrating Leakage From Querying}
\label{SubSA}
Here, we show that simply querying for fairness metrics such as statistical parity can reveal the protected attribute of any targeted individual to the model developers. In fact, there exist strategies (that we collectively refer to as Attribute-Reveal) that can reveal the protected attributes of all the individuals in the test dataset. We begin with a simple toy example.
\begin{example}[Single Query]\label{1acceptexample}
The model developers train only one binary classifier $h_1(\cdot)$ and query the compliance team for statistical parity gap.  Suppose, the model developers want to find the protected attribute of the first individual. They can choose a classifier that accepts only the first individual, i.e., $h_1(x_1)=1$ and $h_1(x_j)=0$ for $j=2,3,\ldots,n$. Observe that the statistical parity gap of this model will reveal the protected attribute of the first individual as follows: $SP_1= \frac{ 1}{N_1}$ if $a_1=1$, and $SP_1=-\frac{1}{N_0}$ if $a_1=0$.
% \begin{equation}\label{onequery}
% SP_1= 
% \begin{cases}
%       \frac{ 1}{N_1},& \text{\textit{if} } a_1=1\\
%     -\frac{1}{N_0}, & \text{\textit{if} } a_1=0
% \end{cases}
% \end{equation}
Thus, a positive statistical parity gap $SP_1$ would give away that the individual belongs to the advantaged group, whereas a negative gap indicates that the individual belongs to the disadvantaged group. The query also reveals the sizes of these groups $N_1$ and $N_0$. \end{example}
We note that such a model might seem contrived; it might also have low accuracy. Thus, in Example~\ref{multipleacceptexample}, we demonstrate a more realistic scenario that can occur more commonly in practice: \emph{two models of comparable accuracy} can be used to reveal the protected attribute of a targeted individual.
\begin{example}[Double Query With Realistic Models]\label{multipleacceptexample}
 Consider two models $h_1(\cdot)$ and $h_2(\cdot)$. The model $h_1(\cdot)$ is trained by model developers (to maximize accuracy) and accepts several individuals in the dataset. The model developers also use a second classifier, $h_2(\cdot)$, that provides the same prediction as $h_1(\cdot)$ except for one targeted individual, i.e., $h_2(x_1)=1-h_1(x_1)$ and $h_2(x_j)=h_1(x_j)$ for $j=2,3,\ldots,n$. Notice that $h_1(\cdot)$ and $h_2(\cdot)$ differ \emph{only} in the first prediction. Their accuracies are almost similar. If one queries for statistical parity of these two models, they can identify the protected attribute of the first individual as follows: $SP_2-SP_1= \frac{ 1}{N_1}$ if $a_1=1$ and $SP_2-SP_1= -\frac{ 1}{N_0}$ if $a_1=0$.
% $$SP_2-SP_1= 
% \begin{cases}
%       \frac{ 1}{N_1},& \text{\textit{if} } a_1=1\\
%     -\frac{1}{N_0}, & \text{\textit{if} } a_1=0
% \end{cases}$$
\end{example}
A similar approach can be adopted to reveal the protected attributes of all the individuals in the test dataset. 
%If multiple models were considered and the predictions of some of them differed only by one individual, then the protected attributes of those individuals where the predictions differed would leak. Now, we generalize this idea.
%\begin{example}
%\rd{SD: This example can be made more realistic - we can describe exactly what you do in experiments - have a model with good accuracy, and then do this flipping}
% Assume we have $m=n$, i.e., we have as many models as the number of individuals in the dataset. Suppose each model $h_i(\cdot)$ accepts only the $j$-th individual for $i=j$ and rejects everyone else, i.e., we have
% $$h_i(x_j)=
% \begin{cases}
%       1,& \text{\textit{for} } i=j\\
%     0, & \text{\textit{for} } i\neq j
% \end{cases}.
% $$
% By using the idea in Example~\ref{1acceptexample}, the protected attributes of all the individuals in the dataset will be identified from these queries of the models. Similarly, if we have $m=n$ models where the predictions of each model differed by one unique element, the protected attributes will be revealed by continuously taking the difference between the queries of two different models, as seen in Example~\ref{multipleacceptexample}. In a practical setting, one model $h_1(\cdot)$ could be  trained with good accuracy, and subsequent models $h_i(x_j)$ could be chosen so the prediction of $h_1(\cdot)$ is flipped when $i=j$. This will allow recovery of protected attributes while keeping the accuracy of the models approximately the same.
% \end{example}
Our next result provides the general criterion for reconstructing the protected attribute of all the individuals in the test dataset using the statistical parity queries (see Appendix~\ref{1qw} for proof).

\begin{restatable}[Reveal From Linear System of Equations] {theorem}{thrmone}
\label{Recovering Sensitive Attributes $A$ using Linear Equations}
Let $\overline{SP}=\begin{bmatrix} 
SP_1 & SP_2 & \ldots & SP_m\\
\end{bmatrix}^T $ be a vector of statistical parity gap queries for $m$ models, $h_i(x_j)$ denote the $i$-th model's prediction for the $j$-th individual, and $\bf H$ be an $m\times n$ matrix where each row represents the binary or logistic predictions of the $i$-th model. If $\textit{rank}(\textbf{H} )= n$, then the protected attributes $A = (a_1, a_2,..., a_j,..., a_n)$ of the entire dataset can be identified by solving a linear system of equations:  
% $\overline{SP}= \textbf{ \textup H}_{m \times n}\textrm{\textbf{\textup v}}$, where,
% $\textrm{\textbf{\textup v}} $ is the unknown vector with elements taking values: $v_j=\frac{ 1}{N_1}$ if $a_j=1$ and $-\frac{1}{N_0}$ if $a_j=0$.
% $$\overline{SP}= \textbf{ \textup H}_{m \times n}\textrm{\textbf{\textup v}} \label{lse}$$
\begin{equation}
\begin{pmatrix} SP_{1} \\ SP_{2} \\ . \\ . \\ . \\ SP_{m} \end{pmatrix}
 =
  \begin{pmatrix}
  h_1(x_1) & h_1(x_2) & ... &...& h_1(x_n)\\
  h_2(x_1) & h_2(x_2) & ... &...& h_2(x_n)
  \\.& . & &&.\\.&  &. &&.\\.&  & &.&.\\
  h_m(x_1) & h_m(x_2) & ... &...&h_m(x_n)
  \end{pmatrix}
  \begin{pmatrix} v_{1} \\ v_{2} \\ . \\ . \\ . \\ v_{n} \end{pmatrix} .
\end{equation}
This can also be expressed as, 
\begin{equation}\label{lse}
    \overline{SP}= \textbf{ \textup H}_{m \times n}\textrm{\textbf{\textup v}},
\end{equation}
where
$\textrm{\textbf{\textup v}} $ is the unknown vector with elements taking values: 
$$v_j= 
\begin{cases}
       \;\; \frac{ 1}{N_1},& \text{\textit{if } } a_j=1\\
    -\frac{1}{N_0}, & \text{\textit{if } } a_j=0.
\end{cases}
$$
\end{restatable}
% \ref{Recovering protected Attributes $A$ using Linear Equations} 

\begin{remark}
Strictly speaking, one needs $m=n-1$ queries since the last individual can be identified using group sizes $N_1$ and $N_0$. However, one may encounter numerical errors when solving for $N_1$ from $1/N_1$ or, $N_0$ from $1/N_0$. This can happen when $N_1,N_0>>1$ and $N_0 \approx N_1$.
\end{remark}
Algorithm~\ref{algo:reveal_full_rank} provides a more realistic strategy by which model developers can choose practical models with comparable accuracy to reveal the protected attributes of all the individuals. Essentially, one base model $h_0(\cdot)$ could be trained, and several similar models $h_i(x_j)$ could be chosen so the prediction of $h_i(x_j)$ is flipped only when $i=j$. The accuracy of these models would remain comparable to the base model $h_0(\cdot)$ since they differ in only one prediction.

% \begin{minipage}{0.49\textwidth}
\begin{algorithm}[t]
\centering
\caption{Attribute-Reveal ($m=n$)}\label{algo:reveal_full_rank}
\begin{algorithmic}
\State Train base model $h_0(\cdot)$ (reasonable accuracy)
\State Choose $m{=}n$ models $h_1(\cdot)$, \ldots, $h_n(\cdot)$ as:
\For{$i=1,2,\ldots,m$} \For{$j=1,2,\ldots,n$}

 $h_i(x_j)= \begin{cases}
   h_0(x_j),& \text{\textit{if } } i\neq j\\
   1-h_0(x_j), & \text{\textit{if } } i= j.
\end{cases}$
\EndFor
\EndFor
\State Query for Statistical Parity Gap for each model and store in $\overline{SP}$
\State Create matrix:
${\bf H}_{m \times n}=[h_1(X)^T,h_2(X)^T,\ldots,h_m(X)^T]^T$
\State Solve linear system of equations: 

$\overline{SP}={\bf Hv}$
\Comment{This algorithm is based on Theorem \ref{Recovering Sensitive Attributes $A$ using Linear Equations}}
\For{$j=1,2,\ldots,n$} \State Detect $\hat{a}_j=\begin{cases}
1, &\text{if } v_j >0\\
0, &\text{otherwise}.
\end{cases}$
\EndFor
\end{algorithmic}
\end{algorithm}
% \end{minipage}
% \hfill
% \begin{minipage}{0.49\textwidth}
% \begin{algorithm}[t]
% \centering
% \caption{Attribute-Reveal ($m\ll n)$}\label{algo:reveal_cs}
% \begin{algorithmic}
% \State Train base model $h_0(\cdot)$ (reasonable accuracy)
% \State Choose $m{=}k$ models $h_1(\cdot)$, \ldots, $h_k(\cdot)$ as: 
% \For{$i=1,2,\ldots,m$}
% \For{$j=1,2,\ldots,n$}

%  Sample $N\sim$ Uniform$(-0.1, 0.1)$ 

%  $h_i(x_j)=   h_0(x_j) + N \ \ $(Clip in $[0,1]$)
% \EndFor
% \EndFor
% \State Query for Statistical parity gap for each model and store in $\overline{SP}$
% \State Create sensing matrix ${\bf H}_{m \times n}$

% $=[h_1(X)^T,h_2(X)^T,\ldots,h_m(X)^T]^T$ 
% \State Compute $\eta = \overline{SP}- \textbf{H} \bar r$
% \State %Find $\bar s$ using compressed sensing:  
% Solve: $\min_{\bar s} {\|\bar s\|_1} \textrm{\;\;s.t.\;\;}  \eta = \textbf{H} \bar s $
% \For{$j=1,2,\ldots,n$} \State Detect $\hat{a}_j{=}\begin{cases}
% 0,  \text{ if } {\bar{s}_j > 0.5(\frac{1}{N_1} +\frac{1}{N_0})}\\
% 1, \text{ otherwise}.
% \end{cases}$
% \Comment{This algorithm is based on Theorem \ref{subsec:cs}}
% \EndFor
% \end{algorithmic}
% \end{algorithm}
% \end{minipage}

We note that if the size of the test dataset is large, it may not be desirable to have as many models as the size of the dataset. This motivates our next question: \textit{Is it possible to obtain the protected attributes of individuals in the dataset with fewer models and queries? } %Section~\ref{subsec:cs} introduces an improved technique of reconstructing the protected attributes of all the individuals that leverages compressed sensing to significantly reduce the number of queries $m$. (e.g., the adult dataset with $n\approx 50000$ \cite{Dua:2019})
\subsection{Leaking Protected Attributes with Fewer Queries using Compressed Sensing (CS)}
\label{subsec:cs}
In this section, we demonstrate that compressed sensing (CS) techniques can be used to obtain the protected attributes of individuals using a significantly smaller number of queries ($m$). First, we  provide a brief background on compressed sensing in Section \ref{ComBack}. Readers already familiar with this topic may skip this subsection.
% The following theorem formalizes our result. For a background on compressed sensing, we refer the reader to Section~\ref{ComBack}.
\subsubsection{Brief Background on Compressed Sensing}\label{ComBack}
The goal~\cite{1614066,lotfi2020compressed} is to recover a vector $x \in \mathbb{R}^n$ from a set of linear measurements $\eta = \Phi x$, where $\eta $  is an $m\times 1$ measurement vector,  $\Phi \in \mathbb{R}^{m \times n}$  is the sensing matrix. CS relies on the sparsity of $x$. \emph{A vector $x$ is  $k$-sparse if it  has only $k$ non-zero entries (typically $k\ll n$)}. The measurements $m$ are typically much smaller than $n$, making this an under-determined system of equations, having many solutions for $x$.
CS focuses on finding the sparsest solution for $x$. This can be expressed as an optimization problem:
$ \min_{x}  ||x||_0 \ \ 
\textrm{s.t.}  \ \ \eta = \Phi x$. Here, $||x||_0$ is the number of nonzero entries\footnote{Note that  $||x||_0$ is not a norm. $||x||_p$ denotes a standard $l_p$-norm for $p \geq 1$, i.e., $||x||_p=\left(\sum_{i=0}^n |x_i|^p \right)^\frac{1}{p}$ for all $x \in \mathbb{R}^n$} of $x$. By minimizing an $l_1$ norm instead, this problem can be relaxed into a convex optimization problem which can be solved using linear programming or other CS algorithms, e.g., Orthogonal Matching Pursuit ~\cite{DBLP:journals/corr/MairalBP14}.
\begin{equation}
\min_{x}  ||x||_1 \ \
\textrm{s.t.} \ \ \ \ \eta = \Phi x.
\label{eq:CS_l1}
\end{equation}
The $l_1$-norm $||x||_1$ is the absolute sum of all entries of $x$. We refer the reader to an excellent survey \cite{4472240} for more information on CS.

For accurate recovery of $k$-sparse vector $x$ from measurements $\eta $, the sensing matrix $ \Phi $ has to satisfy a necessary and sufficient condition called Restricted Isometry Property (RIP)~\cite{candes2005decoding}.

\begin{definition}[$k$-Restricted Isometry Property]\label{def:RIP}
 A matrix $\Phi \in \mathbb{R}^{m \times n}$ satisfies the Restricted Isometry Property of order $k$ if for all $k$-sparse vector $x \in \mathbb{R}^n$, and for some constant  $\delta_k \in (0,1)$, we have
\begin{equation}\label{RIP}
(1-\delta_k)||x||_2^2\leq ||\Phi x||_2^2\leq (1+\delta_k)||x||_2^2.
\end{equation}
\end{definition}
For a matrix $\Phi$ that satisfies the RIP condition of order $2k$ with $\delta_{2k}<\sqrt{2}-1$ (see \cite{CANDES2008589}), the vector $x$ can be reconstructed from $\eta$ and $\Phi$ by solving \eqref{eq:CS_l1}. Random matrices satisfy RIP of any order $k$ with high probability provided that $m = \mathcal{O}(k \log(n/k))$ \cite{baraniuk2008simple,candes2005stable}. Therefore, provided $x$ is sufficiently sparse, smaller measurements $m$ suffice to ensure a high-quality reconstruction of $x$. It is also known that any CS algorithm will require at least $m = \Omega(k \log(n/k))$ for reconstruction \cite{10.2307/40587229}. 

In general, designing or checking whether a sensing matrix satisfies the RIP condition is computationally difficult. RIP only gives a condition on whether a matrix can be used as a sensing matrix but does not necessarily mention how to design one in practice. There are certain random matrices that are known to satisfy the RIP condition with high probability \cite{lotfi2020compressed,4016283}. The most common is the Gaussian matrix, i.e., $\Phi_{m \times n}$ consists of $mn$ independent samples from a zero-mean Gaussian distribution with a variance of $1/m$ \cite{4016283}. The random binary matrix is another well-studied sensing matrix that is known to satisfy the RIP condition \cite{5512379}. 

\subsubsection{Reveal from Compressed Sensing}

\begin{theorem}[Reveal From Compressed Sensing]\label{Recovering Sensitive Attributes using Compressed Sensing}
Assume that $N_0 \ll N_1$, i.e., the size of the disadvantaged group in the dataset is much smaller than the advantaged group, and\:  $\textbf{H}_{m\times n} $ is a random matrix strongly concentrated around its mean. 
% that satisfies the $N_0$-Restricted Isometry Property in Definition \ref{def:RIP}. 
Then, the protected attribute vector $A = (a_1, a_2,..., a_j,..., a_n)$ of the entire test dataset can be obtained using $m=\mathcal{O}(N_0 \log(n/N_0))$ statistical-parity-gap queries. \end{theorem}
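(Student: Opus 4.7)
The plan is to reduce the problem to a standard sparse recovery problem and then invoke a known compressed sensing guarantee. Recall from Theorem~\ref{Recovering Sensitive Attributes $A$ using Linear Equations} that the query responses satisfy $\overline{SP} = \mathbf{H}\mathbf{v}$, where the unknown vector $\mathbf{v}\in\mathbb{R}^n$ has $v_j = 1/N_1$ when $a_j=1$ and $v_j=-1/N_0$ when $a_j=0$. The key observation is that when $N_0 \ll N_1$ the vector $\mathbf{v}$ is \emph{almost constant}: it equals $1/N_1$ at $N_1$ coordinates and takes the distinct value $-1/N_0$ at only $N_0$ coordinates. First I would write $\mathbf{v} = \mathbf{r} + \mathbf{s}$, where $\mathbf{r} = (1/N_1)\mathbf{1}_n$ is an explicit, known baseline and $\mathbf{s}$ is supported only on the disadvantaged indices with $s_j = -(1/N_1 + 1/N_0)$ if $a_j=0$ and $s_j=0$ otherwise. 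By construction $\mathbf{s}$ is exactly $N_0$-sparse, and the group sizes $N_0,N_1$ can be obtained cheaply (e.g., by the all-accept classifier, which reveals $N_1/n$, or from the earlier remark following Theorem~\ref{Recovering Sensitive Attributes $A$ using Linear Equations}), so $\mathbf{r}$ is known to the adversary.

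Next, by linearity $\eta := \overline{SP} - \mathbf{H}\mathbf{r} = \mathbf{H}\mathbf{s}$, which is precisely the classical compressed sensing model: an $N_0$-sparse signal $\mathbf{s}\in\mathbb{R}^n$ observed through $m$ linear measurements via the sensing matrix $\mathbf{H}$. Recovering $\mathbf{s}$ via the convex program $\min_{\mathbf{s}'}\|\mathbf{s}'\|_1$ subject to $\eta = \mathbf{H}\mathbf{s}'$ — exactly the step taken in Algorithm~\ref{algo:reveal_cs} — is justified as long as $\mathbf{H}$ satisfies a suitable restricted isometry property (RIP) or null-space condition at sparsity level $N_0$. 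Once $\mathbf{s}$ is recovered exactly, a simple threshold identifies the disadvantaged indices (where $|s_j|$ is bounded away from zero), giving $\hat{a}_j=0$ there and $\hat{a}_j=1$ elsewhere, which recovers the full attribute vector $A$.

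The main technical step, and the chief obstacle, is establishing that a randomly generated binary matrix $\mathbf{H}\in\{0,1\}^{m\times n}$ permits $\ell_1$ recovery at sparsity $N_0$ once $m \geq C\,N_0 \log(n/N_0)$. This does not follow verbatim from the standard Gaussian or symmetric $\pm 1$ Bernoulli argument because $\{0,1\}$ entries are not centered. Two routes close this gap: (i) observe that after the affine shift $\mathbf{H} - \tfrac{1}{2}\mathbf{1}\mathbf{1}^\top$ the entries are bounded and sub-Gaussian, so classical concentration arguments (Baraniuk et al.) yield RIP with the stated sample complexity; or (ii) invoke the expander-graph based sparse-recovery results of Berinde--Gilbert--Indyk--Karloff--Strauss, which apply directly to sparse binary sensing matrices and give the same $\mathcal{O}(N_0 \log(n/N_0))$ bound. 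Either route ensures that with high probability the $\ell_1$ program recovers $\mathbf{s}$ exactly, proving the theorem.
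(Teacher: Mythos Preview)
Your proposal is correct and follows essentially the same approach as the paper: you subtract the constant baseline $\mathbf{r}=(1/N_1)\mathbf{1}_n$ from $\mathbf{v}$ to obtain an $N_0$-sparse deviation vector, reduce $\overline{SP}=\mathbf{H}\mathbf{v}$ to the standard compressed sensing model $\eta=\mathbf{H}\mathbf{s}$, and invoke $\ell_1$ recovery with the RIP/expander guarantees for random binary matrices (the paper cites the same Berinde et al.\ result). The only cosmetic difference is the sign convention ($\mathbf{v}=\mathbf{r}+\mathbf{s}$ versus the paper's $\mathbf{v}=\bar{r}-\bar{s}$), and you additionally make explicit that $N_0,N_1$ must first be learned to form $\mathbf{r}$, which the paper leaves implicit.
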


%Old theorem 
% \begin{theorem}[Reveal From Compressed Sensing]\label{Recovering Sensitive Attributes using Compressed Sensing}
% Assume that $N_0 \ll N_1$, i.e., the size of the disadvantaged group in the dataset is much smaller than the advantaged group, and\:  $\bf H $ is a binary matrix generated randomly. Then by using compressed sensing, the protected attributes $A$ of the entire test dataset can be obtained using $m=\mathcal{O}(N_0 \log(n/N_0))$ statistical-parity-gap queries. \end{theorem} 

% \textcolor{red}{(we refer readers to section ... for a background on ...)}
\begin{proof} To prove Theorem \ref{Recovering Sensitive Attributes using Compressed Sensing},  we convert \eqref{lse} into a compressed sensing problem (we refer the reader to Section~\ref{ComBack} for a background on compressed sensing). Recall from the proof of Theorem~\ref{Recovering Sensitive Attributes $A$ using Linear Equations} that the reconstruction of the protected attributes reduces to solving the linear system of equations in \eqref{lse}, i.e., $\overline{SP}= {\textbf{H}_{m \times n}} \textrm{\textbf{v}}.$

Compressed sensing allows the number of queries $m$ to be much less than $n$ by exploiting the sparsity of one group in the dataset. Let $\textrm{\textbf{v}}=\bar r - \bar s $ where  $\bar r = \begin{pmatrix} 1/N_1 & 1/N_1 & \ldots & 1/N_1 \end{pmatrix}^T$ and,
    $$ \bar s_j= 
\begin{cases}
      \;\;\;\;0\;\;\;\;\;\;,& \text{\textit{if} } a_j=1\\
    \frac{1}{N_1}+\frac{1}{N_0}, & \text{\textit{if} } a_j=0
\end{cases}$$
We have,
$
\overline{SP}=  \textbf{H} (\bar r - \bar s )
$, leading to
$
\overline{SP}- \textbf{H} \bar r=  \textbf{H} \bar s 
$. Now, let $\eta = \overline{SP}- \textbf{H} \bar r$. Then, we have,
\begin{equation}\label{cseq}
   \eta =  \textbf{H}_{m \times n} \bar s. 
\end{equation}
By simple manipulations, we converted \eqref{lse} into a standard compressed sensing problem \eqref{cseq} where, $\eta$ is the measurement vector, $\textbf{H}$  is the sensing matrix, and $\bar s$ is a sparse  vector with $N_0$ non-zero entries. From Theorem $1.2$  in \cite{CANDES2008589}, the unknown vector $\bar{s}$ can be recovered if $\textbf{H}$ satisfies RIP (see Definition~\ref{def:RIP}) of order $2N_0$ with constant $\delta_{2N_0} <  \sqrt{2}-1$. Furthermore, Theorem \ref{trn:RIP} in Appendix~\ref{apx:col1} shows that a random matrix that is strongly concentrated around its expectation  
satisfies RIP of order $2N_0$ with high probability provided $N_0 \leq c m / \log (n / N_0)$ (implies $2N_0 \leq c' m / \log (n / (2 N_0))$) for constants $c,c'>0$. 
Therefore  $m = \mathcal{O}(N_0 \log(n/N_0))$ statistical parity gap queries suffice to successfully reconstruct the protected attribute vector $A$ of the entire test dataset.
\end{proof}

 \begin{remark}
For the model developers to use the CS technique, they also might need to know $N_1$ and $N_0$. This can be found by querying using a model that only accepts one individual and first checking the sign of $SP_1$. If $SP_1>0,$ then we know $SP_1= \frac{1}{N_1}$ and $a_1=1$, and hence we can get $N_1$. We can also obtain $ N_0=n-N_1.$ Alternatively, if $SP_1<0,$ we know $SP_1=-\frac{1}{N_0}$ (and $a_1=0$), and we can get $N_0$. We can also obtain $ N_1=n-N_0.$
 \end{remark}
% The CS sparsity requirement is satisfied if $N_1$ or $N_0 \ll n$.
To effectively apply CS, the vector $\bar s$ must be sparse, meaning the size of one group (advantaged or disadvantaged) in the dataset must be significantly smaller than the other group. The sparsity requirement does not have a specific strict threshold. However, the smaller the  minority group, the better CS performs with $m=\mathcal{O}(N_0 \log(n/N_0))$ models. As the size of the minority group increases, more models are needed.

The sensing matrix $\textbf{H}$ should satisfy the Restricted Isometry Property (RIP) for CS to work (see Definition~\ref{def:RIP} in Section~\ref{ComBack}). Random binary matrices are well known to satisfy this property with high probability \cite{5512379}. Therefore, choosing models that predict $\{0,1\}$ randomly would work. However, this might lead to unrealistic models that have low accuracy (since essentially it means that model developers are choosing models with random predictions). Gaussian noise is also proven to satisfy the RIP condition but it may result in a model prediction that lies outside the range of $[0,1]$, making the models unrealistic. Even if we clip the values to lie between $[0,1]$, the Gaussian noise may lead to a large variation in accuracy from the base model.

Thus, in Lemma~\ref{col:uni_rip}, we show that a sensing matrix whose entries are independently sampled from a uniform distribution with variance $1/m$ will satisfy the RIP property needed for CS. In practice, this motivates us to use small bounded noise so that the output values deviate as little as possible (Algorithm~\ref{algo:reveal_cs}).

\begin{restatable}{lemma}{uniformRip}\label{col:uni_rip}
Let $\Phi \in \mathbb{R}^{m \times n}$ be a random sensing matrix whose entries are drawn from an i.i.d Uniform$\left(-\sqrt{3/m},\sqrt{3/m}\right)$ distribution, then the  matrix $\Phi$ satisfies the Restricted Isometry Property (RIP) of order $k\leq c_1 m/\log(n / k)$ with at least probability $1-2 e^{-c_2 m}$, for some constant $c_1, c_2 >0$.
\end{restatable}
See proof in Appendix \ref{apx:col1}.

This motivates Algorithm~\ref{algo:reveal_cs}, a novel and realistic strategy by which model developers can choose practical models with comparable accuracy to reveal the protected attributes of all the individuals. Essentially, one base model $h_0(\cdot)$ could be trained. For the other $m$ models, small noise sampled from a uniform distribution is added to each output of the base model. If an output value goes outside $[0,1]$, clip the value to lie between $[0,1]$.

\begin{algorithm}[t]
\centering
\caption{Attribute-Reveal ($m\ll n)$}\label{algo:reveal_cs}
\begin{algorithmic}
\State Train base model $h_0(\cdot)$ (reasonable accuracy)
\State Choose $m$ models $h_1(\cdot)$, \ldots, $h_m(\cdot)$ as: 
\For{$i=1,2,\ldots,m$}
\For{$j=1,2,\ldots,n$}

Sample $n_{ij}\sim$ Unif$\left(-b, b\right)$ for a constant $b$

 $h_i(x_j)=   h_0(x_j) + n_{ij}\ \ $(Clip in $[0,1]$)
\EndFor
\EndFor
\State Query for Statistical parity gap for each model and store in $\overline{SP}$
\State Create sensing matrix ${\bf H}_{m \times n}=[h_1(X)^T,h_2(X)^T,\ldots,h_m(X)^T]^T$ 
\State Compute $\eta = \overline{SP}- \textbf{H} \bar r$
\State %Find $\bar s$ using compressed sensing:  
Solve: $\min_{\bar s} {\|\bar s\|_1} \textrm{\;\;s.t.\;\;}  \eta = \textbf{H} \bar s $
\For{$j=1,2,\ldots,n$} \State Detect $\hat{a}_j{=}\begin{cases}
0,  \text{ if } {\bar{s}_j > 0.5(\frac{1}{N_1} +\frac{1}{N_0})}\\
1, \text{ otherwise}.
\end{cases}$\\
\Comment{This algorithm is based on Theorem \ref{Recovering Sensitive Attributes using Compressed Sensing}}
\EndFor
\end{algorithmic}
\end{algorithm}

% We note that while binary random matrices provably satisfy the RIP property, for this particular application, they might lead to unrealistic models that have low accuracy (since essentially it means that model developers are choosing models with random predictions). Thus, in Algorithm~\ref{algo:reveal_cs}, we provide a novel and more realistic strategy by which model developers can choose practical models with comparable accuracy to reveal the sensitive attributes of all the individuals. Essentially, one base model $h_0(\cdot)$ could be trained. For the other $m$ models, we add a small noise sampled from Uniform$(-0.1,0.1)$ distribution to each output of the base model. 

\subsection{Extension to Absolute Statistical Parity Gap }\label{abssec}
%Here, we look into testing for bias by answering absolute statistical parity queries defined in \eqref{Absolute statistical parity gap}. Absolute statistical parity $|SP_i|$ maps the bias of a model to $[0,1]$ without preserving the sign of statistical parity gap \eqref{Statistical parity gap}. The sign is useful in distinguishing between the advantaged and disadvantaged groups. 

With absolute statistical parity, it is still possible to partition individuals into two groups but no longer possible to determine which group represents the advantaged or disadvantaged populations with certainty. Being able to partition individuals in the test dataset based on their protected attributes is still a privacy infringement. This is mostly due to the ease with which the advantaged and disadvantaged groups can be identified. If somehow the model team could only tell the protected attribute of one individual in a group, e.g., from the other attributes, the partitioning would allow them to learn the protected attribute of the entire test dataset. The partitioned sizes can also be used to determine which group is which. In many cases, the disadvantaged group is often known to be significantly smaller than the advantaged group.
%For example, a model $h_1(\cdot)$ that accepts only one individual would have $|SP_1|= 1/N_1$ or $1/N_0$. However, because there is no sign, the protected attribute of that one individual is still hidden. Though, it is still possible to partition the dataset into two groups based on whether the value is $1/N_1$ or $1/N_0$ (for $N_1 \neq N_0$). Our next result generalizes this notion irrespective of whether $N_0$ and $N_1$ are equal or not.
\begin{theorem}\label{dde}
Given $m=n$ absolute-statistical-parity-gap queries, there exists a strategy that partitions individuals in the test dataset into two different groups based on their protected attributes.  
\end{theorem}

\begin{proof}
We discuss such a strategy in the proof.
Let us use $\alpha$ and $\beta$ to represent the two partitions of the dataset, i.e.,  $A\in \{\alpha,\beta\}^{n}$. Let $N_\alpha$ and $N_\beta$ denote the size of $\alpha$ and $\beta$ partitions respectively. Note that $N_\alpha+N_\beta=n$.

First, obtain $N_\alpha$ and $N_\beta$. This can be done by querying a model that accepts only one individual. The query will return $|SP_1|=1/N_\alpha$ or $1/N_\beta$, revealing the size of the partitions. Now, consider the two cases.\\
\noindent Case 1: $N_\alpha \neq N_\beta$. If the size of the two groups is not equal, query a model $h_1(X)$ that accepts only the first individual in the dataset $x_1$. Assume that the individual belongs to the $\alpha$ partition, i.e., $a_1=\alpha$. $|SP_1|$ would therefore be $1/N_\alpha$. Then, query a second model, $h_2(X)$, that accepts only the second individual $x_2$. If $|SP_2|=1/N_\alpha$, then $a_2=\alpha$. If $|SP_2| \neq 1/N_\alpha$ then $|SP_2|$ must equal $1/N_\beta$, implying that $a_2=\beta$. Continue this procedure for every individual until everyone is classified into $a_j=\alpha$ or $\beta$. 

\noindent Case 2: $N_\alpha = N_\beta$. If the size of the two groups is the same, it would not be possible to differentiate between $1/N_\alpha$ and $1/N_\beta$. Hence, a slightly different approach is taken. First, query a model $h_1(X)$ that only accepts $x_1$ and assume $a_1=\alpha$, resulting in $|SP_1|=1/N_\alpha$. Next, query a second model $h_2(X)$ that accepts only $x_1$ and $x_2$.  The protected attribute of $x_2$ can be obtained using the query $|SP_2|$, i.e.,
$$|SP_2|= 
\begin{cases}
     \;\;\;\;\;  \frac{2}{N_\alpha},& \text{\textit{if } } a_2=\alpha\\
    \frac{1}{N_\alpha}-\frac{1}{N_\beta}=0 , & \text{\textit{if } } a_2=\beta
\end{cases}
$$
In general, to obtain the group of the $j$-th individual $a_j$, select a model that accepts only $x_1$ and $x_j$. To partition the whole dataset using this technique, the model developers would need at most $m=n$ models and queries.
\end{proof}

\begin{remark}\label{rem:extend}
Our results extend to other fairness metrics, such as equalized odds, equal opportunity, and predictive rate parity \footnote{A classifier satisfies equalized odds if the individuals in the advantaged and disadvantaged groups have equal expected outcomes given their true labels. A classifier satisfies predictive rate parity if both groups have an equal probability of a subject with positive predictive value truly belonging to the positive class
\cite{8452913}.}. However, when querying for measures like equal opportunity, the model developers can only identify the protected attributes of individuals with true label $Y=1$. Since equal opportunity conditions on $Y=1$, one does not get any information about individuals with $Y=0$. %As a result, those individuals have their protected attributes hidden.
\end{remark}

\section{Differentially-Private Approaches to Bias Assessments}
In this section, we discuss approaches to prevent the problem of leaking protected attributes. The main goal is to answer fairness queries as accurately as possible but without leaking the protected attributes of any individual in the test dataset. This motivates us to leverage differential privacy \cite{10.1007/11681878_14,10.1007/11787006_1}. 
% The compliance team answers the queries asked by the model developers using a differentially-private randomized mechanism. For such a mechanism to resolve the trade-off between utility and privacy, query responses should not deviate too much from the true values, to ensure that the perturbed answers can still be relevant to the model developers for fairness assessment. At the same time, query responses should not be too close to the true values, to prevent leakage of protected attributes. Differential privacy provides a tool for the compliance team to traverse this trade-off. The privacy parameter $\epsilon$ can be used to quantify the privacy risk posed by releasing a query. A sufficiently low $\epsilon$ limits the model developers’s ability to identify the protected  attribute of an individual. However, lowering $\epsilon$ reduces the utility of the query \cite{10.1561/0400000042}.

The notion of $\epsilon$-differential privacy was introduced in \cite{10.1007/11681878_14,10.1007/11787006_1}. 
% While various definitions can be found in the literature \cite{10.5555/1987260.1987294,Dwork2010PanPrivateSA}, this work focuses on the pure $\epsilon$-differential privacy and its relaxation, ($\epsilon,\delta$)-differential privacy.\cite{10.1145/1866739.1866758} provides an extensive survey on differential privacy. 
The definition of differential privacy used in this work focuses on keeping the protected attributes private. Because the model developers already have access to a portion of the test dataset $(X,Y)$, we define neighboring datasets as datasets that differ only on one individual's protected  attribute $A$. For $A,A' \in \{0,1\}^n$, $\mathcal{S}=(X,Y,A)$ and $ \mathcal{S}' =(X,Y,A')$ are neighboring if $\|A-A'\|_1=1$. Let $\mathcal{D}$ denote a universe of all possible datasets.
% Notice that simply returning random noise guarantees privacy but does not provide any useful information. On the other hand, returning fully accurate queries is a privacy violation since that might leak the protected attribute as illustrated in Section~\ref{setting}.  This leads us to propose the use of differential privacy. 

\begin{definition}
 [$(\epsilon,\delta)$-Differential privacy] Consider any two test datasets $\mathcal{S}=(X,Y,A)$ and $ \mathcal{S}' =(X,Y,A')$, where $A$ and $A'$ differ on the protected  attribute $A$ of one individual.
 We say that a randomized mechanism $\mathcal{M}$ is $(\epsilon,\delta) $-differentially private if, for all neighbouring
$\mathcal{S}$, $\mathcal{S}'$, and all $ \tau \subseteq Range(\mathcal{M})$, we have:\;\;
$$\Pr[\mathcal{M}(\mathcal{S})\in \tau] \leq e^\epsilon \Pr[\mathcal{M}(\mathcal{S}')\in \tau]+\delta \;\; \forall \mathcal{S}, \mathcal{S}'\in \mathcal{D},$$
where the randomness is over the choices made by $\mathcal{M}$ and $\epsilon >0$ is the privacy budget parameter. 
\end{definition}
% The choice of $\epsilon$ determines the amount of noise required. 
A smaller $\epsilon$ introduces greater noise, resulting in enhanced privacy but reduced output accuracy. On the other hand, a larger $\epsilon$ incorporates less noise, leading to weaker privacy guarantees but increased output accuracy. Here, $\delta$ is the probability of information being accidentally leaked. If $\delta$ = 0,  $\mathcal{M}$ is $\epsilon$-differentially private. A popular mechanism that achieves $\epsilon$-differential privacy is the Laplace mechanism \cite{10.1561/0400000042}. The Gaussian mechanism achieves $(\epsilon,\delta)$-DP for numeric queries (details in \cite[Theorem A.1]{10.1561/0400000042}).

\subsection{Laplace Mechanism for Answering Bias Queries Using Global Sensitivity}
\label{lapSec}

We first introduce the definition of global sensitivity for a set of bias queries, e.g., SP queries for a set of $m$ models.
\begin{definition}[$l_1 $-Global sensitivity \cite{10.1561/0400000042}]
\label{lpsen}The $l_1 $-sensitivity of a query function $f$ for all neighboring  $\mathcal{S}, \mathcal{S}' \in \mathcal{D}$ is: $$ \Delta_{f} = \displaystyle \max_{\mathcal{S},\mathcal{S}'}\|f(\mathcal{S})-f(\mathcal{S}')\|_1.$$
\end{definition}
% \noindent \textcolor{red}{For an $m$-dimensional query function $f(\cdot)$, a mechanism $\mathcal{M}$ that adds i.i.d.\ noise from a distribution $\textrm{Lap}({\Delta_{f}}/{\epsilon})$\footnote{$\textrm{Lap}(b)$ denotes a Laplace distribution with scale parameter $b$, i.e., $\textrm{Lap}(x|b)=\frac{1}{2b}\exp \big(-\frac{|x|}{b}\big)$. The variance $\sigma^2=2b^2$. } to each query is $\epsilon$-differential private \cite[Theorem 3.6]{10.1561/0400000042}.
%  $$\mathcal{M}(\mathcal{S},f(\cdot),h(\cdot),\epsilon)= f(\cdot)+(\sigma_1,\sigma_2,...,\sigma_m)$$
 
%  where $\sigma_i$ are i.i.d random variables from $\textrm{Lap}({\Delta_f}/{\epsilon})$. The Gaussian mechanism is used to achieve $(\epsilon,\delta)$-DP for numeric queries (details in \cite[Theorem A.1]{10.1561/0400000042}). SD: Are you repeating this again in Laplace Mechanism section? }
A naive differentially private technique the compliance team could employ is the Laplace mechanism. \\
\noindent \textbf{Laplace Mechanism:} 
Given a query function $f: \mathcal{D} \rightarrow \mathbb{R}^m$, the Laplace mechanism releases queries as follows: $$\mathcal{M}(\mathcal{S},f(\cdot),\epsilon)= f(\mathcal{S})+(n_1,n_2,...,n_j,...,n_m)$$
\noindent where $n_j$ are i.i.d. random variables drawn from $\textrm{Lap}({\Delta_{f}}/{\epsilon})$.
\begin{restatable}{theorem}{thmdelSP}\label{lapSP}
Given statistical parity gap queries ($SP$) for $m$ models, the Laplace mechanism that adds noise from $\rm{Lap}({\Delta_{SP}}/{\epsilon})$ to each query is $\epsilon$-differential private, where $\Delta_{SP}= \frac{m}{2}+\frac{m}{n-1}.$
\end{restatable}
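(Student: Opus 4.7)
The plan is to compute the $\ell_1$-sensitivity $\Delta_{SP}$ of the vector-valued query $f(\mathcal{S},h_{1:m})=(SP_1,\ldots,SP_m)$ and then invoke the standard Laplace mechanism theorem, which guarantees that adding independent $\mathrm{Lap}(\Delta_f/\epsilon)$ noise to each coordinate of a query with $\ell_1$-sensitivity $\Delta_f$ yields $\epsilon$-differential privacy.

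First I would bound the change in a single coordinate. Fix a model $h_i$ and neighbouring datasets $\mathcal{S},\mathcal{S}'$ that differ only in $a_k$ for one individual $k$; by symmetry, assume $a_k=0$ in $\mathcal{S}$ and $a_k=1$ in $\mathcal{S}'$, so the group sizes shift from $(N_0,N_1)$ to $(N_0-1,N_1+1)$. Writing $u=h_i(x_k)$, $T_1=\sum_{a_j=1}h_i(x_j)$, and $T_0=\sum_{a_j=0,\,j\neq k}h_i(x_j)$, a direct calculation gives $SP_i(\mathcal{S}')-SP_i(\mathcal{S})=\frac{N_1u-T_1}{N_1(N_1+1)}-\frac{T_0-(N_0-1)u}{N_0(N_0-1)}$. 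Because $u\in[0,1]$, $T_1\in[0,N_1]$, and $T_0\in[0,N_0-1]$, the two numerators are bounded in absolute value by $N_1$ and $N_0-1$ respectively, yielding the per-model bound $|SP_i(\mathcal{S}')-SP_i(\mathcal{S})|\le \tfrac{1}{N_1+1}+\tfrac{1}{N_0}$.

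Next I would optimize over admissible group sizes. Both $SP_i(\mathcal{S})$ and $SP_i(\mathcal{S}')$ must be well-defined, which forces $N_0\ge 2$ and $N_1\ge 1$, i.e.\ $N_0\in[2,n-1]$. On this interval the function $g(N_0)=\tfrac{1}{N_0}+\tfrac{1}{n-N_0+1}$ is strictly convex, so its maximum is attained at the endpoints $N_0=2$ or $N_0=n-1$, both giving $\tfrac{1}{2}+\tfrac{1}{n-1}$. The reverse-flip direction ($a_k:1\to 0$) is symmetric and yields the same bound. Summing the per-model bound over the $m$ coordinates then gives $\Delta_{SP}\le m\bigl(\tfrac{1}{2}+\tfrac{1}{n-1}\bigr)=\tfrac{m}{2}+\tfrac{m}{n-1}$, and tightness is witnessed by taking all $h_i$ identical to an extremal model (e.g.\ $h_i\equiv 0$ except $h_i(x_k)=1$) on a dataset with $N_0=2$.

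The main obstacle, though mild, is feasibility bookkeeping at the boundary: it is tempting to push $N_0\to 1$ to inflate $1/N_0$, but that would leave the disadvantaged group empty in $\mathcal{S}'$ and make $SP_i$ undefined. One must verify that the worst case is attained at a \emph{valid} pair of neighbours, and that a single extremal model choice saturates all $m$ coordinates simultaneously so the bound is actually achievable rather than only an upper estimate. With $\Delta_{SP}=\tfrac{m}{2}+\tfrac{m}{n-1}$ in hand, the Laplace mechanism theorem closes the proof.
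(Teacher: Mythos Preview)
Your proposal is correct and follows essentially the same approach as the paper: bound the per-model change in $SP_i$ by $\tfrac{1}{N_1+1}+\tfrac{1}{N_0}$, maximize over admissible group sizes to get $\tfrac{1}{2}+\tfrac{1}{n-1}$, and sum over the $m$ coordinates. Your presentation is somewhat cleaner---you treat $u=h_i(x_k)\in[0,1]$ in one stroke rather than splitting into the paper's four cases on $h(x_j)\in\{0,1\}$ and flip direction, and you justify the boundary maximum via convexity and verify tightness explicitly---but the core argument is the same.
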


\begin{restatable}{theorem}{thmabsSP}\label{ASPD}
Given absolute statistical parity gap queries ($|SP|$) for $m$ models, the Laplace mechanism that adds noise from  $\rm{Lap}({\Delta_{SP}}/{\epsilon})$ to each query is $\epsilon$-differential private, where $\Delta_{|SP|}= \frac{m}{2}.$
\end{restatable}

%\begin{proof} 

% To prove this, we find the $l_1$-global sensitivity defined in \ref{lpsen} for statistical parity gap $SP$. We assume that $N_0<N_1$, and there is at least $1$ individual in the disadvantaged group, i.e., $N_0 \geq 1$ in all possible datasets. A neighboring dataset is one such that the protected attribute differs for only one individual. There are two possibilities to consider here.

Similarly, for equal opportunity gap $EO$ and absolute equal opportunity gap $|EO|$ the Laplace mechanism adds noise with sensitivity $\Delta_{EO}=
\frac{m}{2}+\frac{m}{n-1}$ and $\Delta_{|EO|}= \frac{m}{2}$. See Appendix \ref{sasd} for proofs.

% The proofs of Theorem ~\ref{ASPD} and ~\ref{eolap} follows similar techniques as Theorem~\ref{lapSP}, and will be included in an expanded version or appendix. 

% The fairness metrics used are bounded between $[-1,1]$, and $[0,1]$ for their absolute values.
\begin{remark}Because the Laplace and Gaussian mechanisms have infinite support $(-\infty,\infty)$, query results can sit outside the range of our fairness metrics  ($[-1,1]$, or $[0,1]$ for absolute value metrics). In general, these mechanisms do not automatically deal with bounding constraints. Some choose to ignore them and release the raw outputs of the mechanisms since it still satisfies DP's privacy and accuracy guarantees. In our case, probabilities of out-of-bounds values are often small unless $\epsilon$ is chosen to be very small. If one insists on having bounded outputs, there are recent approaches~\cite{liu2019statistical}, such as the truncated and boundary-inflated truncation approaches. Other approaches map out-of-bounds outputs to the boundaries of the metric.
% In the truncation procedure, the out-of-bound output values are thrown away and one samples continuously until an output within the bounds is attained. This is equivalent to sampling from a truncated distribution that satisfies DP. Another approach is to map the out-of-bounds outputs to the boundaries of the metric. \textcolor{red}{Can make this shorter while still showing we are scholarly}
\end{remark}

\subsection{Attribute-Conceal: Our Proposed Technique Using Smooth Sensitivity} 
\label{SmoSen}
We have focused on adding noise to the query calibrated to its global sensitivity.  However, this might be excessive in many cases, that is,  the frameworks  would add so much noise that the output would be meaningless. Since we are interested in a particular test dataset $\mathcal{S}$,  we define the local sensitivity of a query function $f$ and test dataset $\mathcal{S}$ in $l_1$ as:
\begin{equation}\label{eqn:localsen}
 \Delta^{local}_f(\mathcal{S})= \max_{\mathcal{S'}:d(S,S')=1}\|f(\mathcal{S})-f(\mathcal{S'})\|_1.
\end{equation}
The challenge of calibrating noise to the local sensitivity $\Delta^{local}_f(\mathcal{S})$ is that it might  leak information about the test dataset and therefore not sufficient to guarantee DP~\cite{10.1145/1250790.1250803}. To address this, we investigate the idea of smooth sensitivity introduced in \cite{10.1145/1250790.1250803}. This is an intermediate notion between local and global sensitivity that allows dataset-specific additive noise to be added to achieve DP.
\begin{algorithm}[t]\caption{Attribute-Conceal for Compliance Team}\label{algo:conceal}
\begin{algorithmic}
\State \textbf{Input:} Model predictions $h_1(\cdot)$, $h_2(\cdot)$, \ldots, $h_m(\cdot)$, $N_1$, $N_0$, $\epsilon$
\State Compute statistical parity gap for each model and store in $\overline{SP}$
\State Compute Smooth Sensitivity: $$
\Delta^{smooth}_{SP,\beta}(\mathcal{S})= \max \bigg( \frac{m}{N_1+1}+\frac{m}{N_0}\; ,\; e^{\frac{-\epsilon(N_0-2)}{6m}}\left(\frac{m}{n-1}+\frac{m}{2}\right) \bigg)
$$
\For{$i=1,2,\ldots,m$}
\State Sample $Z_i$ from a standard Cauchy distribution
\State $\overline{SP}_i \gets  \overline{SP}_i + \frac{6 \Delta^{smooth}_{SP,\beta}}{\epsilon} Z_i $\Comment{This algorithm is based on Theorem \ref{caucy}}
\EndFor 
\State \textbf{Return:} $\overline{SP}$
\end{algorithmic}
\end{algorithm}
\begin{definition}[Smooth sensitivity \cite{10.1145/1250790.1250803}] 
\label{SSens} For $\beta>0$, the $\beta$-smooth sensitivity of f is 
\begin{equation}\Delta^{smooth}_{f,\beta}(\mathcal{S})=\max_{\mathcal{\bar{S}} \in \mathcal{D} } \bigg(\Delta^{local}_f(\mathcal{\bar{S}}) \cdot e^{-\beta d(\mathcal{S},\mathcal{\bar{S}})}\bigg),
\end{equation}\label{def:SS}
where $d(\mathcal{S},\mathcal{\bar{S}})$ denotes the number of entries in which protected attribute vectors $A$ and $\bar{A}$ disagree.
\end{definition}
\begin{figure*}
     \centering
     \begin{subfigure}{0.32\textwidth}
         \centering
      \includegraphics[scale=0.165]{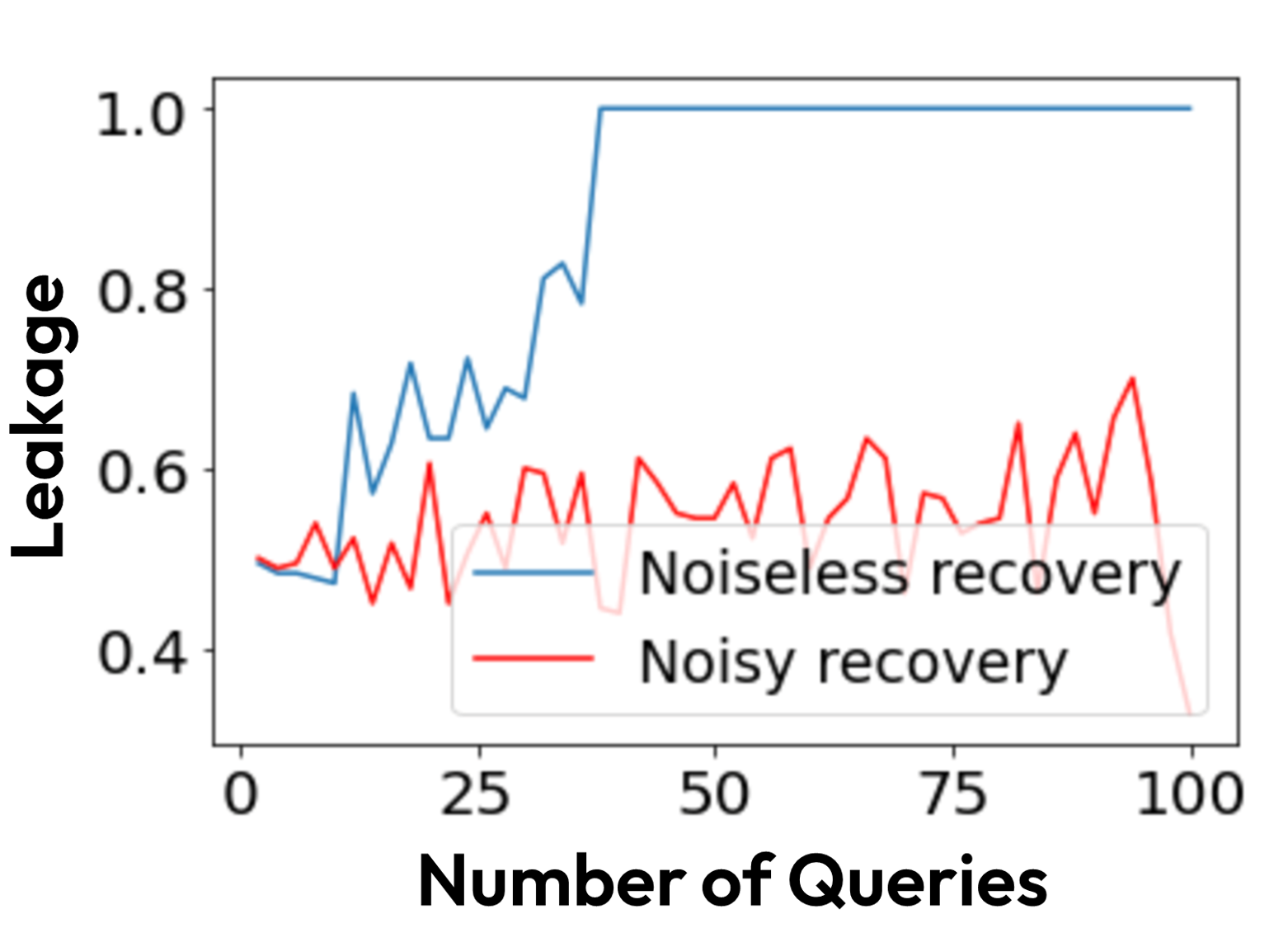}
     %{ScreenShot2022-09-23at3_31_17PM.png}
     \end{subfigure}
     \hfill
     \begin{subfigure}{0.32\textwidth}
         \centering
         \includegraphics[scale=0.165]{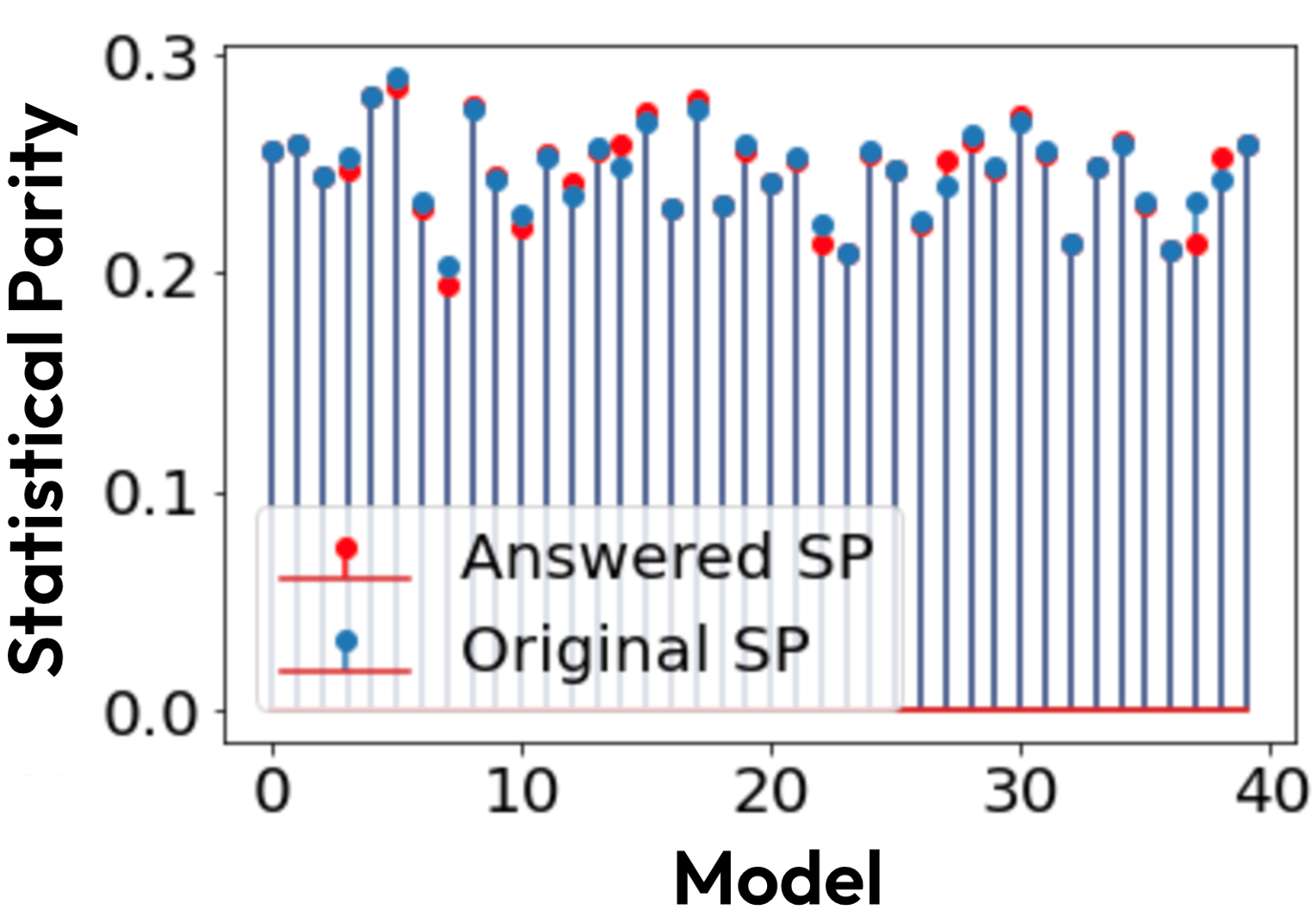}

     \end{subfigure}
     \hfill
     \begin{subfigure}{0.32\textwidth}
         \centering
         \includegraphics[scale=0.165]{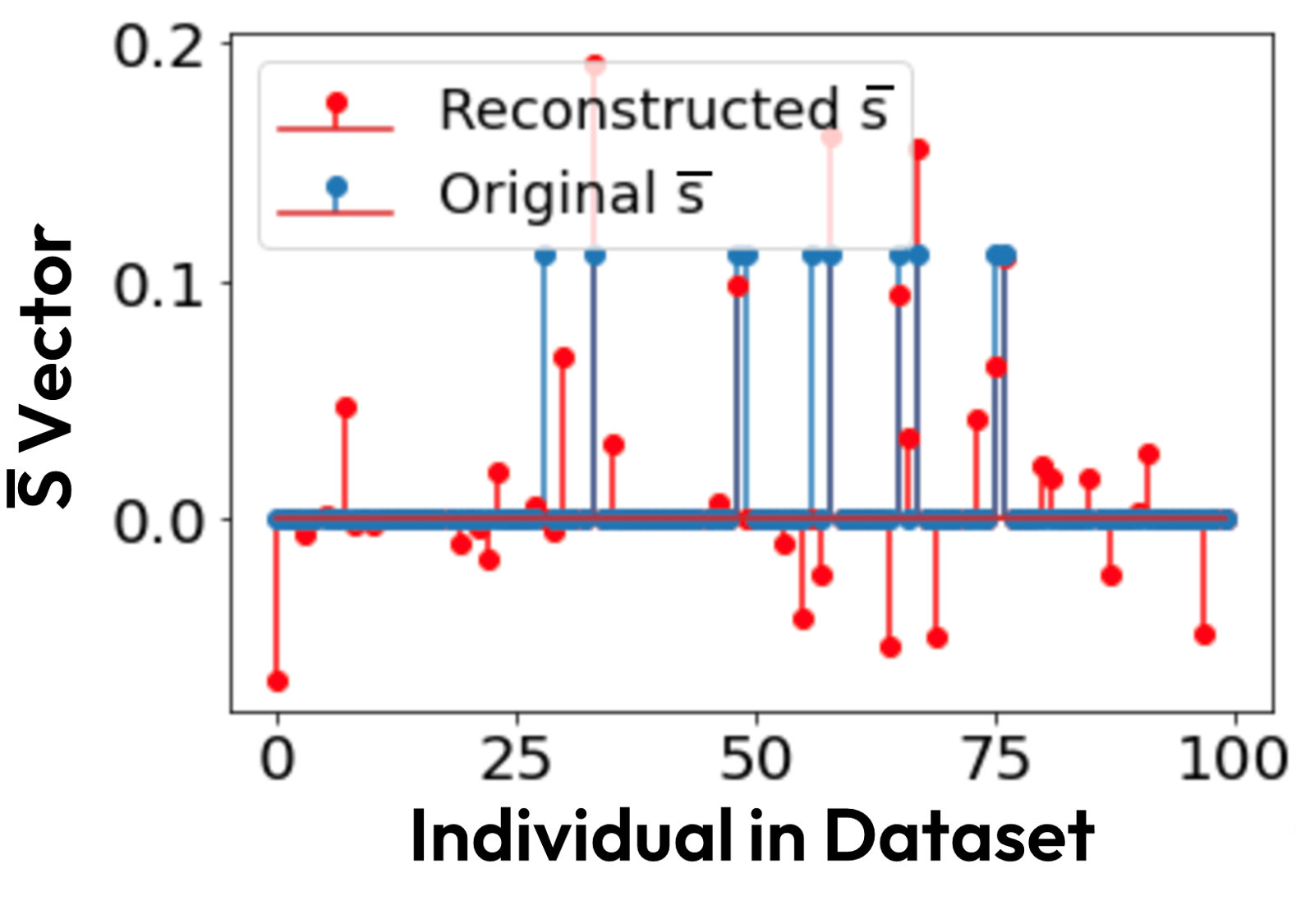}

     \end{subfigure}
        \caption{Experimental Results on Adult dataset for test size $n=100$: (a) Leakage as a function of No. of queries in noisy ($\epsilon=100$) and noiseless case ($\epsilon= \infty$). Attribute-Conceal prevents Leakage even with an increase in the number of queries. Note that random guessing achieves a Leakage of about 50\%, meaning no individual's protected attribute is recovered with certainty. (b) Answered SP queries for $m=40$ achieves a low Avg. SP err of $7.41 \times  10^{-4}$. (c) Reconstructed $\bar{s}$ vector with Attribute-Conceal varies a lot from the original ($\bar s$ vector reveals an individual's protected attribute, see equation \eqref{cseq}).  Trained models have an Avg. accuracy of 86.23\%, and std of 0.2583.}
        \label{erssqe}
\end{figure*}

\begin{table*}[h]
\caption{Detailed Experimental Results on Adult dataset for test size $n=100$ and $n=1000$: Attribute-Conceal (Ours) has much lower Avg. SP err (query error) than Laplace Mechanism (Lap.) for the same privacy parameter $\epsilon$ (and similar Leakage).}
\begin{tabular}{|c|c|cc|cc|c|c|cc|cc|}
\hline
{$m$} & {$\epsilon$} & \multicolumn{4}{c|}{$n=100$} & {$m$} & {$\epsilon$} & \multicolumn{4}{c|}{$n=1000$} \\ \cline{3-6} \cline{9-12}
& & \multicolumn{2}{c|}{Avg. SP err ($\times 10^{-3}$)} & \multicolumn{2}{c|}{Leakage (\%)} & & & \multicolumn{2}{c|}{Avg. SP err ($\times 10^{-3}$)} & \multicolumn{2}{c|}{Leakage (\%)} \\ \cline{3-6} \cline{9-12}
& & Ours & Lap. & Ours & Lap. & & & Ours & Lap. & Ours & Lap. \\ \hline
{$25$} & 5 & 10.9 & 200.9 & 50 & 54 & {$300$} & 5 & 122.2 & 874.5 & 52 & 49 \\ 
& 10 & 5.1 & 112.6 & 48 & 49 & & 10 & 35.7 & 160.8 & 50 & 51 \\ 
& 100 & 0.05 & 12.1 & 58 & 44 & & 100 & 2.0 & 11.5 & 53 & 51 \\ 
& $\infty$ & 0 & 0 & 63 & 63 & & $\infty$ & 0 & 0 & 79 & 79 \\ \cline{1-12} 
{$40$}
& 5 & 77.9 & 660.2 & 55 & 57 & {$400$} & 5 & 140.0 & 893.4 & 52 & 49 \\ 
& 10 & 12.3 & 236.8 & 49 & 43 & & 10 & 42.2 & 216.3 & 52 & 52 \\ 
& 100 & 0.7 & 27.6 & 67 & 47 & & 100 & 0.2 & 54.4 & 55 & 49 \\ 
& $\infty$ & 0 & 0 & 100 & 100 & & $\infty$ & 0 & 0 & 100 & 100 \\ \hline
\end{tabular}
\label{uuiuhui}
\end{table*}

\begin{theorem}[Dataset Specific $\epsilon$-DP Statistical Parity Query]
\label{caucy} 
% \rd{VERSION 1}
% Let $Z = (z_1,z_2,...,z_m)$ be an $m$-dimensional random noise vector with entries independently sampled from a Cauchy distribution with density proportional to $\frac{1}{1+z_i^2}$. For statistical parity gap query ($SP$), the mechanism $\mathcal{M}(\mathcal{S})= SP(\mathcal{S})+\frac{6 \Delta^{smooth}_{SP,\beta}(\mathcal{S})}{\epsilon} \cdot Z$ is $\epsilon$-differentially private, where the smooth sensitivity $\Delta^{smooth}_{SP,\beta}(\mathcal{S})$ is given by:
% \\
% \rd{VERSION 2} 

Let $Z$ be an $m$-dimensional random noise with entries independently sampled from a Cauchy distribution $P(z)= \prod_{i=1}^m\frac{1}{1+z_i^2}$. For statistical parity gap query ($SP$), the mechanism $\mathcal{M}(\mathcal{S})= SP(\mathcal{S})+\frac{6 \Delta^{smooth}_{SP,\beta}(\mathcal{S})}{\epsilon}Z$ is $\epsilon$-differentially private, where the smooth sensitivity $\Delta^{smooth}_{SP,\beta}(\mathcal{S})$ is given by:

% Let $Z$ be random noise samples from an $m$-dimensional Cauchy distribution with  density proportional to $\frac{1}{1+z^2}$. For statistical parity gap query ($SP$), the mechanism $\mathcal{M}(\mathcal{S})= SP(\mathcal{S})+\frac{6 \Delta^{smooth}_{SP,\beta}}{\epsilon} \cdot Z$ is $\epsilon$-differentially private, where the smooth sensitivity $\Delta^{smooth}_{SP,\beta}$ is given by:
$$
\Delta^{smooth}_{SP,\beta}(\mathcal{S})= \max \bigg( \frac{m}{N_1+1}+\frac{m}{N_0}\; ,\; e^{\frac{-\epsilon(N_0-2)}{6m}}\left(\frac{m}{n-1}+\frac{m}{2}\right) \bigg).
$$
Here, $N_0$ and $N_1$ are sizes of disadvantaged and advantaged groups in dataset $\mathcal{S}$, such that $N_0 \leq N_1$, $N_0+N_1=n$, and $\beta = \frac{\epsilon}{6m}$.
\end{theorem}
\begin{proof} We first find the $\beta$-smooth sensitivity of statistical parity gap $\Delta^{smooth}_{SP,\beta}(\mathcal{S})$.
\begin{align*}
\Delta^{smooth}_{SP,\beta}(\mathcal{S})&  {\stackrel{\text{(a)}}{=}} \max_{\mathcal{\bar{S}} \in \mathcal{D} } \bigg(\Delta LS_{SP}(\mathcal{\bar{S}}) {\cdot} e^{-\beta d(\mathcal{S},\mathcal{\bar{S}})}\bigg)\\
& {\stackrel{\text{(b)}}{=}} \max_{k=0,1,\ldots,N_0-2} e^{-k\beta} \bigg( \max_{\bar{S}:d(S,\bar{S})=k} \Delta LS_{SP}(\bar{S})\bigg)\\
 & {\stackrel{\text{(c)}}{=}} \max_{k=0,\ldots,N_0-2} e^{-k\beta} \bigg( \frac{m}{N_1+k+1}{+}\frac{m}{N_0-k}\bigg)\\
 & {\stackrel{\text{(d)}}{=}} \max \bigg( \frac{m}{N_1+1}{+}\frac{m}{N_0},e^{-(N_0{-}2)\beta}\bigg(\frac{m}{n-1}{+}\frac{m}{2}\bigg)\bigg).
\end{align*}
% (for details \cite[Def 3.1]{10.1145/1250790.1250803}. 
% Here, $(b)$ is the expression of the smooth sensitivity in terms of the $k$ distance sensitivity of $SP$ \cite[Def 3.1]{10.1145/1250790.1250803}. 
% The $k$ distance sensitivity is the maximum change in $SP$ when considering neighboring datasets that are distance $k$ away (note that local sensitivity in \eqref{eqn:localsen} is the  $k=1$ distance sensitivity). 
Here, (a) is from the definition of smooth sensitivity in \eqref{def:SS}. Next, (b) is the expression of the smooth sensitivity when looking at datasets at distance $k$ (for details see \cite[Def 3.1]{10.1145/1250790.1250803}). To obtain (c) we find the maximum local sensitivity over datasets that are at distance $k$ (see Lemma \ref{lem:kdist} in Appendix~\ref{derivative}). Finally, (d) holds since the function is convex and hence its maximum occurs at the boundaries $k \in \{0,N_0-2\}$ (see Lemma \ref{lem:maxb} in Appendix~\ref{derivative}).
The rest of the proof follows directly from Lemma \ref{lemma26} and Lemma \ref{lemma27} in Appendix \ref{apx:ss} with $\gamma=2$ and $\beta=\frac{\epsilon}{6m}$.
\end{proof}

To achieve pure differential privacy, noise is introduced following a Cauchy distribution. This motivates Algorithm~\ref{algo:conceal} (Attribute-Conceal), a differentially private technique to answer statistical parity gap queries based on Theorem~\ref{caucy}. 

We also note that the behavior of the Cauchy distribution can sometimes be unusual, as it does not have an expected value and has heavy tails that decay polynomially, compared to the exponential decay observed in Laplace and Gaussian distributions. In Theorem \ref{thm:lastlap}, we therefore also provide a relaxed $(\epsilon,\delta)$-differentially private mechanism that introduces noise from a Laplace distribution.
\begin{figure*}[t]
     \centering
     \begin{subfigure}[b]{0.32\textwidth}
         \centering
        \includegraphics[scale=0.17]{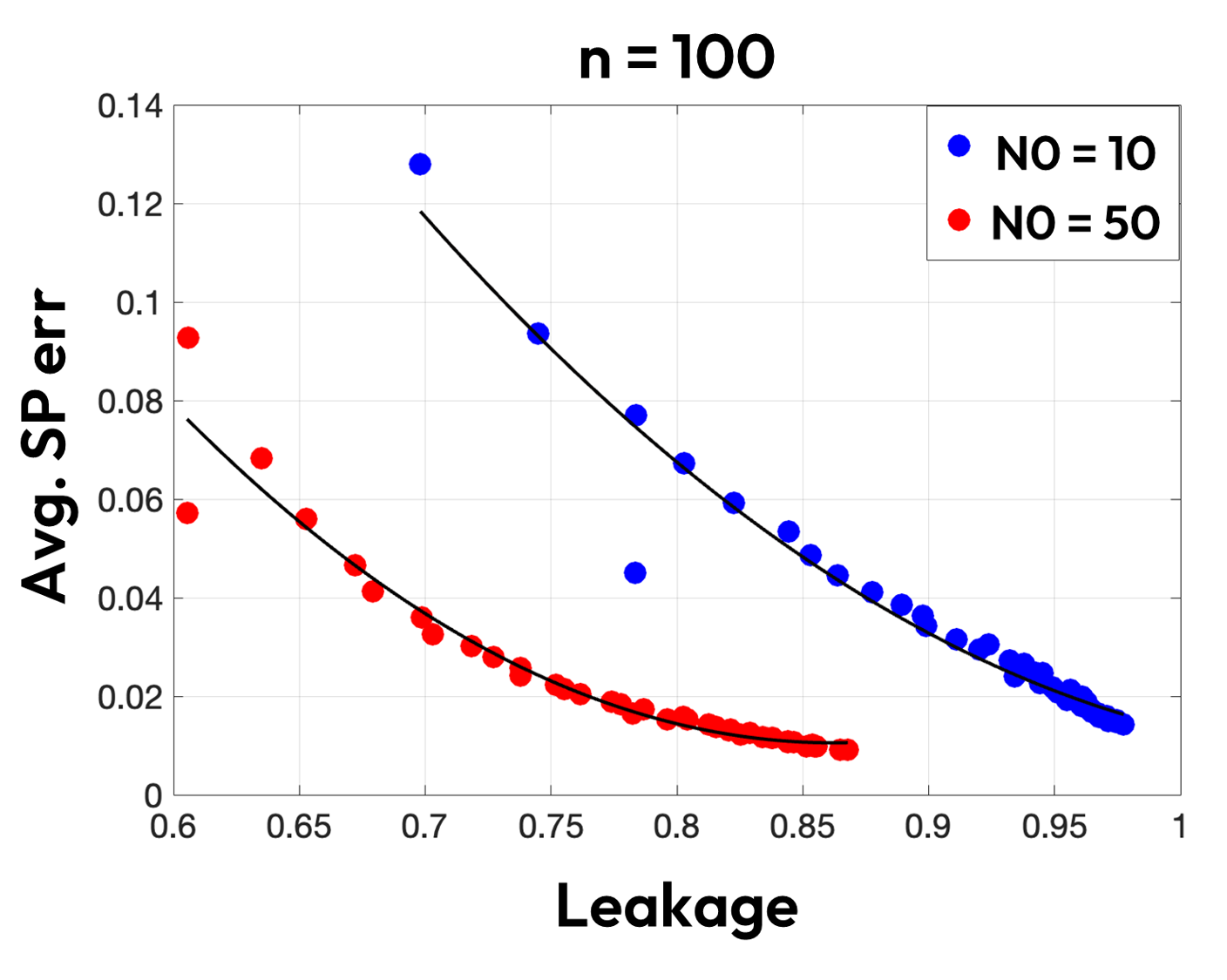}
    
         \label{fig:y equals x}
     \end{subfigure}
     \hfill
     \begin{subfigure}[b]{0.32\textwidth}
         \centering
         \includegraphics[scale=0.17]{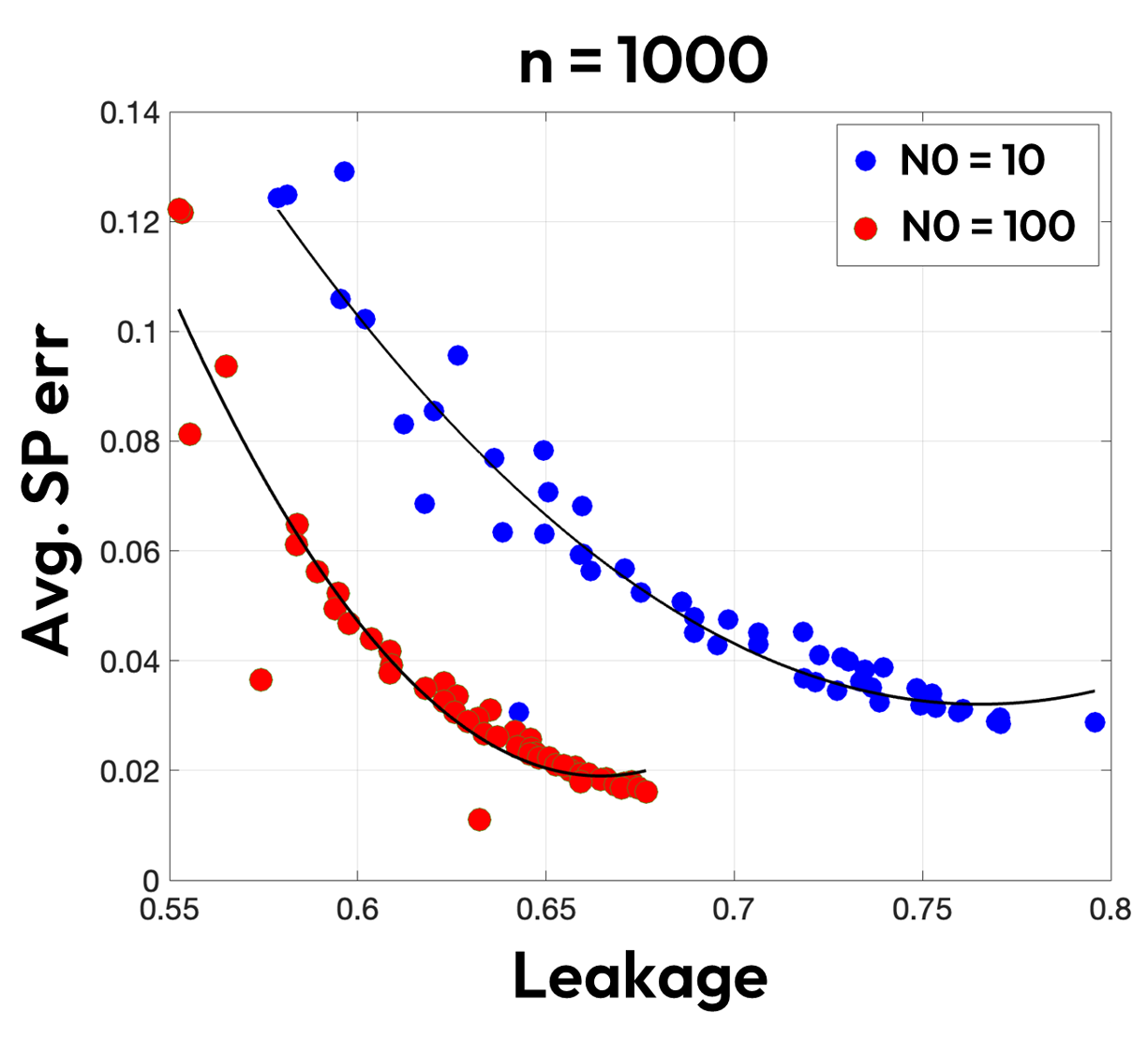}
         
         \label{fig:three sin x}
     \end{subfigure}
     \hfill
     \begin{subfigure}[b]{0.32\textwidth}
         \centering
         \includegraphics[scale=0.155]{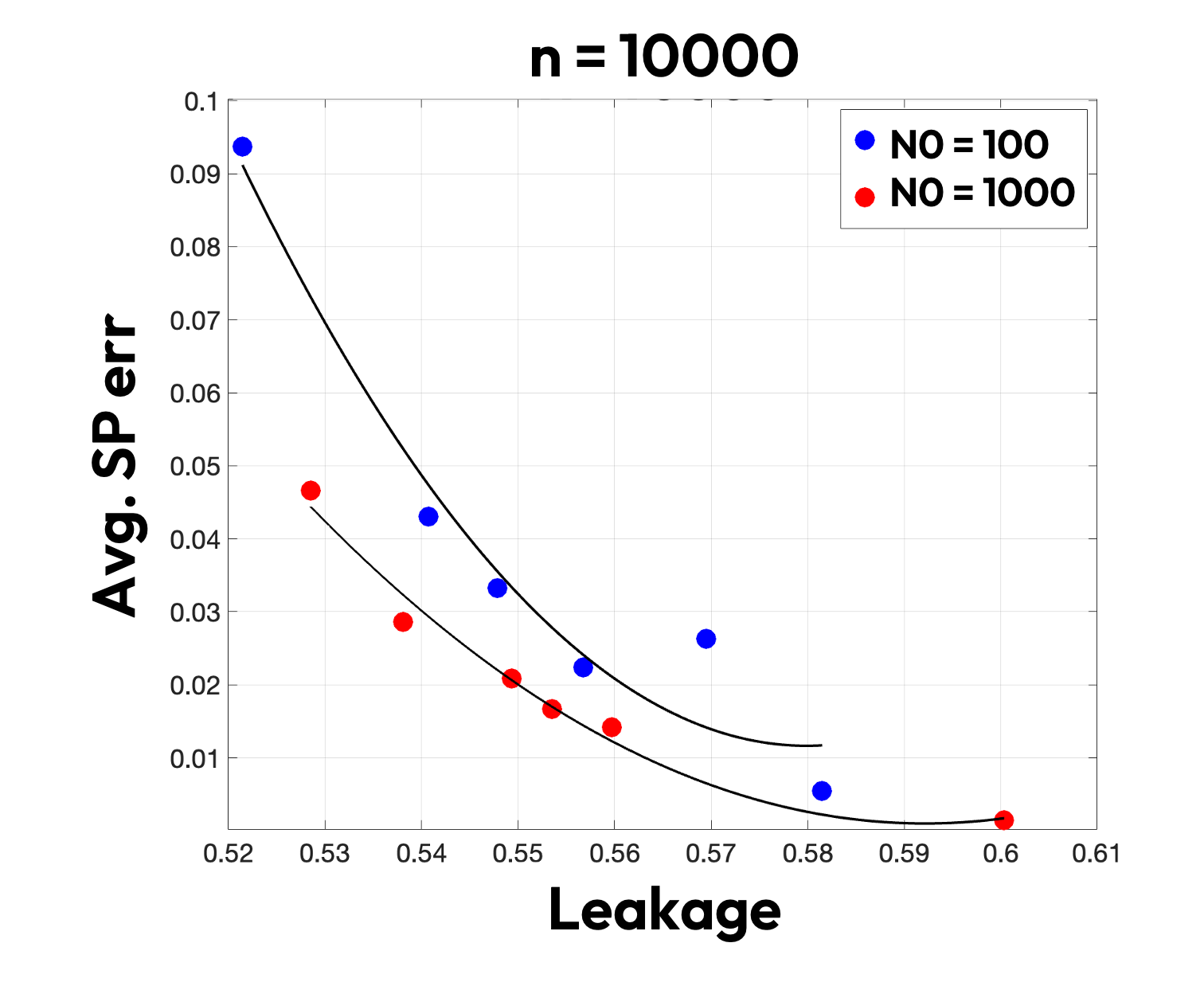}

     \end{subfigure}
        \caption{Experimental Results on Synthetic data for test size $n=100,1000,$ and $10000$: Avg. SP err and Leakage trade-off with Attribute-Conceal. Each point represents an $\epsilon \in [10,500]$ averaged over 50 runs. Results for varying sparsity $N_0$ and $m=\mathcal{O}( N_0 \log n /N_0$).}
        \label{erczczqe}
\end{figure*}

\begin{table*}[h]
\caption{Experimental Results on Synthetic data: Avg. SP err and Leakage for test dataset size $n=1000$ and $n=10000$ for Attribute-Conceal varying test dataset sparsity and model number $m$ with  $\epsilon=100$.}
\begin{tabular}{|c|cccc|c|cc|}
\hline
{$N_0$} & \multicolumn{4}{c|}{$n=1000$ $\epsilon=100$}                                                         & {$N_0$} & \multicolumn{2}{c|}{$n=10000$ $\epsilon=100$}               \\ \cline{2-5} \cline{7-8} 
                       & \multicolumn{2}{c|}{Avg. SP err ($\times 10^{-3}$)}                     & \multicolumn{2}{c|}{Leakage (\%)} &                        & \multicolumn{1}{c|}{Avg. SP err ($\times 10^{-3}$)} & Leakage (\%) \\ \cline{2-5} \cline{7-8} 
                       & \multicolumn{1}{c|}{$m=500$} & \multicolumn{1}{c|}{$m=900$} & \multicolumn{1}{c|}{$m=500$} & $m=900$ &                        & \multicolumn{1}{c|}{$m=1000$}           & $m=1000$          \\ \hline
10                     & \multicolumn{1}{c|}{128.8}  & \multicolumn{1}{c|}{191.4}  & \multicolumn{1}{c|}{65}    & 64    & 50                     & \multicolumn{1}{c|}{96.9}             & 64              \\
50                     & \multicolumn{1}{c|}{90.8}  & \multicolumn{1}{c|}{156.3}  & \multicolumn{1}{c|}{67}    & 61    & 100                    & \multicolumn{1}{c|}{76.9}             & 61              \\
100                    & \multicolumn{1}{c|}{87.4}  & \multicolumn{1}{c|}{102.9}  & \multicolumn{1}{c|}{62}    & 55    & 500                    & \multicolumn{1}{c|}{38.3}             & 59              \\
200                    & \multicolumn{1}{c|}{51.9}  & \multicolumn{1}{c|}{96.7}  & \multicolumn{1}{c|}{62}    & 47    & 1000                   & \multicolumn{1}{c|}{26.9}             & 59              \\
300                    & \multicolumn{1}{c|}{38.9}  & \multicolumn{1}{c|}{59.9}  & \multicolumn{1}{c|}{48}    & 47    & 3000                   & \multicolumn{1}{c|}{12.8}             & 55              \\
400                    & \multicolumn{1}{c|}{41.6}  & \multicolumn{1}{c|}{54.5}  & \multicolumn{1}{c|}{47}    & 45    & 5000                   & \multicolumn{1}{c|}{9.9}             & 48              \\ \hline
\end{tabular}
 \label{table31}
\end{table*}

\begin{table*}[h]
\caption{Experimental Results on Synthetic data: Avg. SP err and Leakage for test size $n=1000$ and $n=1000$ for Attribute-Conceal varying $N_0$ and privacy parameter $\epsilon$.}
\begin{tabular}{|c|ccccccc|cccc|}
\hline
                            & \multicolumn{7}{c|}{$n=1000$}                                                                                     & \multicolumn{4}{c|}{$n=10000$}                                                                                 \\ \cline{2-12} 
{$\epsilon$} & \multicolumn{4}{c|}{Avg. SP err ($\times 10^{-3}$)}                        & \multicolumn{3}{c|}{Leakage (\%)}           & \multicolumn{2}{c|}{Avg. SP err ($\times 10^{-3}$)}                          & \multicolumn{2}{c|}{Leakage (\%)}      \\ \cline{2-12} 
                            & \multicolumn{2}{c|}{$N_0=10$} & \multicolumn{2}{c|}{$N_0=100$} & \multicolumn{2}{c|}{$N_0=10$} & $N_0=100$        & \multicolumn{1}{c|}{$N_0=100$} & \multicolumn{1}{c|}{$N_0=10^3$} & \multicolumn{1}{c|}{$N_0=100$} & $N_0=10^3$ \\ \hline
10                          & \multicolumn{2}{c|}{929}    & \multicolumn{2}{c|}{838}     & \multicolumn{2}{c|}{49}     & 51             & \multicolumn{1}{c|}{832.3}    & \multicolumn{1}{c|}{678.4}     & \multicolumn{1}{c|}{51}      & 50       \\
50                          & \multicolumn{2}{c|}{149.3}   & \multicolumn{2}{c|}{44.8}    & \multicolumn{2}{c|}{65}     & 55             & \multicolumn{1}{c|}{46.5}    & \multicolumn{1}{c|}{85.3}     & \multicolumn{1}{c|}{55}      & 52       \\
100                         & \multicolumn{2}{c|}{83}    & \multicolumn{2}{c|}{30.5}    & \multicolumn{2}{c|}{71}     & 56             & \multicolumn{1}{c|}{33.1}    & \multicolumn{1}{c|}{47.4}     & \multicolumn{1}{c|}{55}      & 53       \\
500                         & \multicolumn{2}{c|}{26.6}   & \multicolumn{2}{c|}{9.7}    & \multicolumn{2}{c|}{75}     & 53             & \multicolumn{1}{c|}{31}     & \multicolumn{1}{c|}{13.6}     & \multicolumn{1}{c|}{57}      & 56       \\ \hline
\end{tabular}
 \label{table3}
\end{table*}

\begin{theorem}\label{thm:lastlap} Let $Z$ be random noise samples from a $m$-dimensional Laplace distribution $P(z)=\frac{1}{2^m}  e^{-\|z\|_1}$. For statistical parity gap query ($SP$), and $ \epsilon,\delta \in (0,1)$, the mechanism $\mathcal{M}(\mathcal{S})= SP(\mathcal{S})+\frac{2 \Delta^{smooth}_{SP,\beta}}{\epsilon}Z$ is $(\epsilon,\delta)$-differentially private, where the smooth sensitivity is given by:
$$
\Delta^{smooth}_{SP,\beta}(\mathcal{S})= \max \bigg( \frac{m}{N_1+1}+\frac{m}{N_0}\; ,\; e^{-\frac{(N_0-2)\epsilon}{4(m+\ln(2/\delta))}}\left(\frac{m}{n-1}+\frac{m}{2}\right) \bigg).
$$
Here, $N_0$ and $N_1$ are sizes of disadvantaged and advantaged groups in dataset $\mathcal{S}$, such that $N_0 \leq N_1$, $N_0+N_1=n$, and $\beta = \frac{\epsilon}{4(m+\ln(2/\delta))}$.
\end{theorem}
\begin{proof}
The proof follows from Lemma \ref{lemma26} and \ref{lemma29} in Appendix \ref{apx:ss}. Sensitivity analysis follow from proof of Theorem \ref{caucy} with $\beta = \frac{\epsilon}{4(m+\ln(2/\delta))}$
\end{proof}

These results can be extended to the absolute statistical parity gap with $\beta$-smooth sensitivity (see Appendix~\ref{AAS}), i.e.,
$$\Delta^{smooth}_{|SP|,\beta}(\mathcal{S})=\max \bigg(\frac{m}{N_0} , \frac{me^{-(N_0-2)\beta}}{2} \bigg).$$

\section{Experiments}
We include experimental results on the Adult dataset (see Table~\ref{uuiuhui} and Figure~\ref{erssqe}) and simulations on synthetic dataset (see Figure~\ref{erczczqe}, Table~\ref{table31} and Table~\ref{table3}). For the Adult dataset, the protected attribute is race (assumed binary). We restrict ourselves to only White and Black with the latter being relatively sparse (10.4\%). We first demonstrate how querying using Attribute-Reveal can leak the protected attributes. Then, we show that Attribute-Conceal effectively prevents this leakage (also outperforming the naive Laplace mechanism). Our performance metrics of interest are (1) Average error in answering the Statistical Parity query (Avg. SP Err); and (2) Accuracy of correctly recovering (essentially leaking) the protected attribute balanced across both races (Leakage, formally defined in Definition~\ref{leak}). To observe the tradeoff between Avg. SP Err (query error) and Leakage over a broader range of parameters (privacy parameter $\epsilon$, sparsity $N_0$, and test size $n$), we also perform simulations on a synthetic dataset. We provide additional experimental results on the German Credit dataset~\cite{Dua:2019a} in Appendix~\ref{expref}.

\begin{definition}[Leakage(\%)]
\label{leak} Let ${N_A}$ be number of individuals in the advantaged group whose protected attribute was correctly predicted and ${N_B}$ be number of individuals in the disadvantaged group whose protected attribute was correctly predicted.\\  The leakage is defined as:
$$\text{Leakage}= \frac{1}{2} \left( \frac{{N_A}}{{N_1 }}+\frac{{N_B}}{{N_0}}\right) \times 100.$$
\end{definition}
The leakage is the balanced accuracy of recovery. This is used to deal with imbalanced data, i.e., when one target class appears a lot more than the other.

\subsection{Experiments with Adult Dataset}
The Adult dataset has 14 attributes for 48842 loan applicants. The classification task is to predict whether an individual's income is more or less that $50$K~\cite{1251_2016}. The feature "race" is chosen as the protected attribute. This feature is excluded from training and only used for statistical parity evaluation. We restrict ourselves to only White and Black (binary) with the latter being relatively sparse (10.4\%). We compare  Attribute-Conceal with a naive differential privacy technique, Laplace mechanism. We experiment with different test sizes and show our results in Figure \ref{erssqe} and Table \ref{uuiuhui}.

 Given an input, our base model $h_0(\cdot)$ outputs a probability value between $0$ and $1$. For the other $m$ models, we add a small noise sampled from Uniform$(-0.1,0.1)$ distribution to each output of the base model. 

We observe that the accuracy of the other models is quite close to the original. We created 40 models from the base model: they had a mean accuracy of $86.23\%$ and a standard deviation of $0.2583$.  

Interestingly, our experiments demonstrate that with the uniform noise, we can still recover the protected attributes with far fewer models than the full-rank case.  As shown in Figure \ref{erssqe}, we are able to recover all the protected attributes using $m=40$ models. Notice that, this is roughly $\mathcal{O}(N_0 \log(n/N_0))$. 

Our recovery of protected attributes is based on the values of the $\bar{s}$ vector in Algorithm~\ref{algo:reveal_cs}. Ideally, it should be $0$ if $a_j=1$ and ${1}/{N_1}+1/N_0$ if $a_j=0$. In our practical implementation, the compressed sensing solution is not always exact but still good enough to infer the protected attribute. Due to this, we use a threshold between $0$ and ${1}/{N_1}+1/N_0$ to identify the protected attribute.
\subsection{Experiments with Synthetic Dataset}
 We perform simulations on synthetic data to observe the trade-off between Avg. SP Err and Leakage over a broader range of parameters (privacy parameter $\epsilon$, sparsity $N_0$, and test size $n$). In Figure \ref{erczczqe}, we show this trade-off with Attribute-Conceal for test size $n=100,1000,$ and $10000$. Each point represents an $\epsilon \in [10,500]$ averaged over 50 runs. We show results for varying sparsity $N_0$ and $m=\mathcal{O}(N_0 \log n /N_0$). Table~\ref{table31} and Table~\ref{table3} provide additional experimental results highlighting the Avg. SP err and Leakage for test size $n=1000$ and $n=10000$ for different sparsity $N_0$, the model number $m$, and the privacy parameter $\epsilon$. A clear trend observed is that Attribute-Conceal results in a significantly lower Avg. SP error compared to the Laplace mechanism, for a similar level of protected attribute leakage.
\section{Conclusion and Future Work}
This work highlights a major concern with fairness assessments in scenarios where protected attributes such as gender or race cannot be accessed during model training. Showing that simply querying for fairness metrics can leak sensitive information to model developers raises important questions about the ethical implications of these assessments. As a remedy, we also propose a novel technique, Attribute-Conceal, which achieves differential privacy by calibrating noise to the smooth
sensitivity of our bias query. 

The results of this study have important implications for regulations and privacy in the field of algorithmic fairness and provide a new approach to protect the sensitive information of individuals in fairness assessments. This also provides a potential resolution to the continuing debate about whether protected attributes should be used in training. Future research could look into expanding the framework to include other fairness metrics or incorporating these techniques into training or post-processing to directly reduce bias without leaking protected attributes.

Our current approach assumes that both model developers and the compliance (or auditing) team work with the same test set. However, this  might not hold true in every context. The compliance/auditing team may choose to use a different test set. However, note that a different test set may not adequately represent the true training distribution, which could potentially affect generalization. 

We note that while our focus is on leakage from bias queries, future work could also look into \emph{inferring} the protected attributes from the other available features using alternate techniques~\cite{du2012privacy,7282765}. For example, if one has prior knowledge that a feature such as hours-worked-per-week is strongly correlated with gender, one might just be able to \emph{infer} gender with reasonable accuracy from that feature. However, it remains debatable if such \emph{indirect inferring} of protected attribute from correlated features would legally constitute a violation of disparate treatment (or privacy). On the other hand, asking the compliance team for bias assessments actually accesses the protected attributes using queries. We do make a distinction between \emph{leaking} and \emph{inferring} protected attributes here. An interesting scenario would arise if one exploits a synergy of both bias queries as well as inference mechanisms to obtain even more accurate predictions of protected attributes than using either of them individually, and if such techniques would constitute a violation of anti-discrimination and privacy.

\bibliographystyle{ACM-Reference-Format}

\bibliography{sample-base}

\medskip

%%%%%%%%%%%%%%%%%%%%%%%%%%%%%%%%%%%%%%%%%%%%%%%%%%%%%%%%%%%%

%%%%%%%%%%%%%%%%%%%%%%%%%%%%%%%%%%%%%%%%%%%%%%%%%%%%%%%%%%%%

\newpage
\appendix

\onecolumn

%\section{Appendix}

\section{Appendix to Section~\ref{SubSA}}
\label{1qw}

\subsection{Proof of Theorem~\ref{Recovering Sensitive Attributes $A$ using Linear Equations}}

\thrmone*
% \begin{equation}
%      \begin{pmatrix} SP_{1} \\ SP_{2} \\ . \\ . \\ . \\ SP_{m} \end{pmatrix}
%  =
%   \begin{pmatrix}
%   h_1(x_1) & h_1(x_2) & ... &...& h_1(x_n)\\
%   h_2(x_1) & h_2(x_2) & ... &...& h_2(x_n)
%   \\.& . & &&.\\.&  &. &&.\\.&  & &.&.\\
%   h_m(x_1) & h_m(x_2) & ... &...&h_m(x_n)
%   \end{pmatrix}
%   \begin{pmatrix} v_{1} \\ v_{2} \\ . \\ . \\ . \\ v_{n} \end{pmatrix} 
% \end{equation}

% and $\textrm{\textbf{\textup v}} $ is the unknown vector with elements taking values:
% $$v_j= 
% \begin{cases}
%         \frac{ 1}{N_1},& \text{\textit{if } } a_j=1\\
%     -\frac{1}{N_0}, & \text{\textit{if } } a_j=0.
% \end{cases}
% $$

According to Theorem~\ref{Recovering Sensitive Attributes $A$ using Linear Equations}, if there are as many linearly independent model predictions $h_i(X),i \in [m]$ as individuals in the dataset, then with their corresponding statistical parity queries,  $\overline{SP}=\begin{bmatrix} SP_1 & SP_2 & ... & SP_m \\  \end{bmatrix}^T $, one can \emph{always} get the protected attributes of all individuals in the dataset by solving for $\textrm{\textbf{v}}$ in \eqref{lse}. The protected attribute $a_j$ of the $j$-th individual is revealed by the value of each entry $v_j$ of the vector $\textrm{\textbf{v}} $. If we assume the worst-case scenario, where the model developers can directly choose the output predictions of these models, then all they have to do is to use $m=n$ models, and have each model's output prediction to be linearly independent, making $\textit{rank}(\textbf{H})= n$. As an example, suppose you have $m=n$ models, and the $i$-th model accepts only the $j$-th individual for $i=j$ and rejects everyone else. 
\section{Appendix To Section~\ref{subsec:cs}}\label{apx:col1}

\subsection{Background: Relevant Results in Compressed Sensing}
To prove Lemma \ref{col:uni_rip}, we will use some  results from existing literature on RIP for compressed sensing (restated in Theorem \ref{trn:RIP}, Theorem \ref{trn:subRIP}, and Lemma \ref{trn:uni}).

\begin{definition}[\cite{baraniuk2008simple}]\label{trn:SCE}
Given a random matrix $\Phi \in \mathbb{R}^{m \times n}$ and $x \in \mathbb{R}^n$, we say that $\Phi$ is strongly concentrated around its expectation if
$$
\Pr\left(\left|\|\Phi x\|_2-\|x\|_2\right|\geq\epsilon\|x\|_2\right)\leq 2 \exp \left(-c_0(\epsilon) m\right), \quad  \epsilon \in(0,1),
$$
where $\mathbb{E}(\|\Phi x\|^2) = \|x\|^2$ and $c_0(\epsilon)>0$ depends only on $\epsilon$. 
\end{definition}

\begin{theorem}[\cite{baraniuk2008simple}, Theorem 5.2] \label{trn:RIP}
Suppose that $m, n \in \mathbb{N}$ and $\delta \in(0,1)$ are given. If $\Phi \in \mathbb{R}^{m \times n}$ is a random matrix that is strongly concentrated around its expectation, then there exist $c_1, c_2>0$ depending only on $\delta$ such that the RIP holds for $\Phi$ with the prescribed $\delta$ and is of order $k$ obeying $k \leq c_1 m / \log (n / k)$ with probability at least $1-2 e^{-c_2 m}$.
\end{theorem}

Next, we demonstrate that a specific class of random variables called subgaussian random variables satisfy the requirements of Theorem \ref{trn:RIP}.
\begin{definition}[Subgaussian \cite{subgaussian}]
 A random variable $Z$ is said to be $b$-subgaussian, i.e., $Z\sim$ Sub($b^2$), if there exist some $b>0$ such that for every $t \in \mathbb{R}$,
$$
\mathbb{E}(e^{t Z}) \leq e^{b^2 t^2 / 2}.
$$
\end{definition}

\begin{theorem}[\cite{devore2009instance}, Lemma 6.1]\label{trn:subRIP} Suppose that $\Phi \in \mathbb{R}^{m \times n}$ is a random matrix with i.i.d. entries from a $b$-subgaussian distribution with variance $\frac{1}{m}$. Then for all $x \in \mathbb{R}^n$, we have $\mathbb{E}\left(\|\Phi x\|_2^2\right)=\|x\|_2^2$ and
$
\mathbb{P}\left(| \|\Phi x\|_2^2-\|x\|_2^2| \geq \epsilon\|x\|_2^2  \right)\leq 2 \exp \left(-\kappa \epsilon^2 m\right)
$ for some $\kappa>0$ and $\epsilon \in(0,1)$.

\end{theorem}
Now, we demonstrate that a uniform distribution satisfies the requirements of Theorem \ref{trn:RIP} by first showing that it is subgaussian.

\begin{lemma}\label{trn:uni}
Suppose random variable $Z$ is uniformly distributed over the interval $[-b, b]$ for some fixed $b>0$. Then $Z$ is $b$-subgaussian.
\end{lemma}
\begin{proof}
For a uniform distribution,
\begin{align*}
\mathbb{E} e^{t Z} & =\frac{1}{2 b} \int_{-b}^b e^{t x} d x=\frac{1}{2 b t}\left[e^{b t}-e^{-b t}\right]=\sum_{n=0}^{\infty} \frac{(b t)^{2 n}}{(2 n+1) !} \leq \sum_{n=0}^{\infty} \frac{(b t)^{2 n}}{n ! 2^n} = e^{b^2 t^2/2},
\end{align*}
where the inequality is from the fact that $(2 n+1) !\geq n ! 2^n$. 
\end{proof}

\subsection{Proof of Lemma \ref{col:uni_rip}}

\uniformRip*
\begin{proof}

We show a random sensing matrix $\textbf{H}_{m \times n}$, whose entries are drawn from an i.i.d. Uniform distribution in the range $\left(-\sqrt{3/m},\sqrt{3/m}\right)$, will be strongly concentrated around its expectation. According to Lemma \ref{trn:uni}, the Uniform distribution is a subgaussian distribution. From Theorem \ref{trn:subRIP}, matrices from subgaussian distributions with variance $1/m$ are strongly concentrated around their expectations. Finally, by Theorem \ref{trn:RIP},  matrix $\textbf{H}_{m \times n}$ which is strongly concentrated around its expectation, will satisfy the Restricted Isometry Property with at least probability $1-2 e^{-\kappa m}$ for some constant $\kappa>0$.
\end{proof}

\section{Appendix to Section~\ref{lapSec}}\label{sasd}

\subsection{Proof of Theorem~\ref{lapSP}}

\thmdelSP*
\begin{proof}

To prove Theorem \ref{lapSP}, we find the $l_1$-global sensitivity  for statistical parity gap $SP$. 

Let $\lambda=\sum_{\{j|a_j=1\}} h_i(x_j)$ and $\mu=\sum_{\{j|a_j=0\}} h_i(x_j)$. We assume without loss of generality that $N_0\leq N_1$, and there is at least $1$ individual in the disadvantaged group, i.e., $N_0 \geq 1$ in all possible datasets. A neighboring dataset is one such that the protected attribute differs for only one individual. Consider these two cases.

 \noindent  Case 1: An individual in the disadvantaged group, which the model predicted an unfavorable outcome, differs in the neighboring dataset in its protected attribute, i.e., $a_j=0$ with $h(x_j)=0$ becomes $a'_j=1$ with $h(x_j)=0$.

\begin{align*}
 & |SP(\mathcal{S},h(\cdot))-SP(\mathcal{S}',h(\cdot))| \stackrel{\text{(a)}}{=} \bigg | \frac{\lambda}{N_1} -\frac{\mu}{N_0}- \bigg (\frac{\lambda}{N_1+1}-\frac{\mu}{N_0-1}\bigg)\bigg| \\
 & = \frac{\lambda}{N_1(N_1+1)}+\frac{\mu}{N_0(N_0-1)}
 \stackrel{\text{(b)}}{\leq}  \frac{N_1}{N_1(N_1+1)}+\frac{N_0-1}{N_0(N_0-1)} \\
 & = \frac{1}{N_1+1}+\frac{1}{N_0} \stackrel{\text{(c)}}{\leq} \frac{1}{n-1}+\frac{1}{2}.
\end{align*}
where (a) holds since the size of the disadvantaged group decreased by one and that of the advantaged group increased. $\lambda, \mu$ was not incremented because $h(x_j)=0$. Next, (b) holds since $\lambda$ is upper bounded by $N_1$, while $\mu$ is upper bounded by $N_0-1$ since $h(x_j)=0$. Notice, this is the local sensitivity expression for the reference dataset. Hence, (c) is a maximization over all possible datasets to find the global sensitivity. The maximum occurs when $N_0=2$ and $N_1=n-2$, meaning, this dataset and the neighboring dataset with $N_0=1$ and $N_1=n-1$ have the largest sensitivity.

 \noindent  Case 2: An individual in the disadvantaged group, which the model predicted a favorable outcome, differs in the neighboring dataset in its protected attribute, i.e., $a_j=0$ and $h(x_j)=1$ becomes $a'_j=1$ and $h(x_j)=1$. Steps follow similar arguments with case 1.

\begin{align*}
 & |SP(\mathcal{S},h(\cdot))-SP(\mathcal{S}',h(\cdot))|  \stackrel{\text{(a)}}{=} \bigg | \frac{\lambda}{N_1} -\frac{\mu}{N_0}- \bigg(\frac{\lambda+1}{N_1+1}-\frac{\mu-1}{N_0-1}\bigg)\bigg| \\
 & =  \frac{N_1-\lambda}{N_1(N_1+1)}+\frac{N_0-\mu}{N_0(N_0-1)}
  \stackrel{\text{(b)}}{\leq} \frac{N_1}{N_1(N_1+1)}+\frac{N_0-1}{N_0(N_0-1)}\\
  & = \frac{1}{N_1+1}+\frac{1}{N_0}
   \stackrel{\text{(c)}}{\leq} \frac{1}{n-1}+\frac{1}{2}.
\end{align*}

 The maximum $l_1$ difference between any two neighboring datasets is upper bounded by $1/(n-1)+1/2$. Therefore with $m$ queries, we have the global sensitivity $\Delta_{SP}=m/(n-1)+m/2$.
\end{proof}

\subsection{Proof of Theorem~\ref{ASPD}}

 \thmabsSP*
%  \begin{theorem}\label{ASPD}
% For absolute statistical parity gap $|SP|$, the Laplace mechanism that adds noise from  $\rm{Lap}({\Delta_{SP}}/{\epsilon})$ is $\epsilon$-differential private where $\Delta_{|SP|}= \frac{m}{2}$.
% \end{theorem}
\begin{proof}
To prove Theorem \ref{ASPD}, we find the $l_1$-global sensitivity for absolute statistical parity gap $|SP|$.

Let $\lambda=\sum_{\{j|a_j=1\}} h_i(x_j)$ and $\mu=\sum_{\{j|a_j=0\}} h_i(x_j)$. We assume without loss of generality that $N_0\leq N_1$, and there is at least 1 individual in the disadvantaged group, i.e., $N_0 \geq 1$, in all possible datasets. A neighboring dataset is one such that the protected attribute differ by one individual. Consider these two cases.

\noindent  Case 1: An individual in the disadvantaged group, which the model predicted an unfavorable outcome, differs in the neighboring dataset in its protected attribute, i.e., $a_j=0$ with $h(x_j)=0$ becomes $a'_j=1$ with $h(x_j)=0$.

\begin{align*}
 &\left||SP|(\mathcal{S},h(\cdot))-|SP||(\mathcal{S}',h(\cdot))|\right|
 \stackrel{\text{(a)}}{=} \Bigg | \bigg|\frac{\lambda}{N_1} -\frac{\mu}{N_0}\bigg|- \bigg |\frac{\lambda}{N_1+1}-\frac{\mu}{N_0-1}\bigg|\Bigg|\\
& \stackrel{\text{(b)}}{\leq} \Bigg | \bigg|\frac{0}{N_1} -\frac{N_0-1}{N_0}\bigg|- \bigg |\frac{0}{N_1+1}-\frac{N_0-1}{N_0-1}\bigg|\Bigg| = \frac{1}{N_0} \stackrel{\text{(c)}}{\leq} \frac{1}{2}.
\end{align*}
Here, (a) holds since the protected attribute of the individual was flipped from $a_j=0$ to $a'_j=1$, $N_0$ decreased by one, and $N_1$ increased by one. $\lambda, \mu$ were not incremented since $h(x_j)=0$. To obtain the upper bound in (b), $\lambda$ is lower bounded by $0$, while $\mu$ is upper bounded by $N_0-1$ since $h(x_j)=0$. Notice, this is the local sensitivity expression for the reference dataset. Hence, (c) is a maximization over all possible datasets to find the global sensitivity. The maximum occurs when $N_0=2$.

\noindent  Case 2: An individual in the disadvantaged group, which the model predicted a favorable outcome, differs in the neighboring dataset in its protected attribute, i.e., $a_j=0$ with $h(x_j)=1$ becomes $a'_j=1$ with $h(x_j)=1$. Steps follow similar arguments with case 1.

\begin{align*}
& \left||SP|(\mathcal{S},h(\cdot))-|SP|(\mathcal{S}',h(\cdot))|\right|
 \stackrel{\text{(a)}}{=} \Bigg | \bigg|\frac{\lambda}{N_1} -\frac{\mu}{N_0}\bigg|- \bigg |\frac{\lambda+1}{N_1+1}-\frac{\mu-1}{N_0-1}\bigg|\Bigg|\\
& \stackrel{\text{(b)}}{\leq} \Bigg | \bigg|\frac{N_1}{N_1} -\frac{1}{N_0}\bigg|- \bigg |\frac{N_1+1}{N_1+1}-\frac{1-1}{N_0-1}\bigg|\Bigg| = \frac{1}{N_0} \stackrel{\text{(c)}}{\leq} \frac{1}{2}.
\end{align*}

The maximum $l_1$ difference between any two neighboring datasets is upper bounded by $1/2$. Therefore with $m$ queries, we have the global sensitivity $\Delta_{|SP|}=m/2$.
 
\end{proof} 

\section{Appendix to Section~\ref{SmoSen}}
\label{apx:ss}

\subsection{Background: Relevant Results in Smooth Sensitivity}
We first define an admissible noise distribution. 
% \begin{definition}[\cite{??} Def. 3.1.] The sensitivity of $f$ at distance $k$ is
% $$
% \max _{y \in D^n: d(x, y) \leq k} L S_f(y) .
% $$
% Now smooth sensitivity can be expressed in terms of $A^{(k)}$ :
% $$
% \begin{aligned}
% S_{f, \epsilon}^*(x) & =\max _{k=0,1, \ldots, n} e^{-k \epsilon}\left(\max _{y: d(x, y)=k} L S_f(y)\right) \\
% & =\max _{k=0,1, \ldots, n} e^{-k \epsilon} A^{(k)}(x) .
% \end{aligned}
% $$
% \end{definition}

\begin{definition}[\cite{10.1145/1250790.1250803}, Def. 2.5 (Admissible Noise Distribution)] A probability distribution on $\mathbb{R}^m$, given by a density function $p$, is $(\alpha, \beta)$-admissible (with respect to $\ell_1$) if, for $\alpha=\alpha(\epsilon, \delta), \beta=\beta(\epsilon, \delta)$, the following two conditions hold for all $\Delta \in \mathbb{R}^m$ and $\lambda \in \mathbb{R}$ satisfying $\|\Delta\|_1 \leq \alpha$ and $|\lambda| \leq \beta$, and for all measurable subsets $\tau \subseteq \mathbb{R}^m$:
$$
\begin{array}{ll}
\text { Sliding Property: } & \operatorname{Pr}_{Z \sim p}[Z \in \tau] \leq e^{\frac{\epsilon}{2}} \cdot \operatorname{Pr}_{Z \sim p}[Z \in \tau+\Delta]+\frac{\delta}{2} . \\
\text { Dilation Property: } & \operatorname{Pr}_{Z \sim p}[Z \in \tau] \leq e^{\frac{\epsilon}{2}} \cdot \operatorname{Pr}_{Z \sim p}\left[Z \in e^\lambda \cdot \tau \right]+\frac{\delta}{2}.
\end{array}
$$    
\end{definition}
Next, we show how an admissible noise distribution helps achieve differential privacy. 
\begin{lemma}[\cite{10.1145/1250790.1250803}, Lemma 2.6.] \label{lemma26}Let $h$ be an $(\alpha, \beta)$-admissible noise probability density function, and let $Z$ be a fresh random variable sampled according to $h$. For a function $f: \mathcal{D} \rightarrow \mathbb{R}^m$, let $S: D^n \rightarrow \mathbb{R}$ be a $\beta$-smooth sensitivity of $f$. Then algorithm $\mathcal{M}(x)=f(x)+\frac{S(x)}{\alpha} \cdot Z$ is $(\epsilon, \delta)$-differentially private.
\end{lemma}

\subsection{Proofs of Theorems \ref{caucy} and \ref{thm:lastlap}}

The following two results will help prove Theorem \ref{caucy} and \ref{thm:lastlap} in the paper.
\begin{lemma}[\cite{10.1145/1250790.1250803}, Lemma 2.7.]\label{lemma27} For any $\gamma>1$, the distribution with density $p(z) \propto \frac{1}{1+|z|^\gamma}$ is $\left(\frac{\epsilon}{2(\gamma+1)}, \frac{\epsilon}{2(\gamma+1)}\right)$-admissible (with $\delta=0)$. Moreover, the m-dimensional product of independent copies of $p$ is $\left(\frac{\epsilon}{2(\gamma+1)}, \frac{\epsilon}{2 m(\gamma+1)}\right)$-admissible.
\end{lemma}
For Theorem \ref{caucy}, observe that the standard Cauchy distribution $p(z) \propto \frac{1}{1+|z|^2}$ is $(\alpha,\beta)$-admissible with $\gamma=2$ from Lemma~\ref{lemma27}. Hence, the standard Cauchy distribution can be used to achieve $\epsilon$-differential privacy from Lemma \ref{lemma26}.

\begin{lemma}[\cite{10.1145/1250790.1250803}, Lemma 2.9.]\label{lemma29} For $\epsilon, \delta \in(0,1)$, the m-dimensional Laplace distribution, $p(z)=\frac{1}{2^m} \cdot e^{-\|z\|_1}$, is $(\alpha, \beta)$-admissible with $\alpha=\frac{\epsilon}{2}$, and $\beta=\frac{\epsilon}{2 \rho_{\delta / 2}\left(\|Z\|_1\right)}$, where $Z \sim p$. In particular, it suffices to use $\alpha=\frac{\epsilon}{2}$ and $\beta=\frac{\epsilon}{4(m+\ln (2 / \delta))}$. For $m=1$, it suffices to use $\beta=\frac{\epsilon}{2 \ln (2 / \delta)}$. The expression $\rho_{\delta/2}(\|Z\|_1)$ represents the $(1-\frac{\delta}{2})$ quantile of random variable $\|Z\|_1$.
\end{lemma}

For Theorem \ref{thm:lastlap}, we use Lemma~\ref{lemma29} to show that the Laplace distribution is also an admissible noise distribution and hence can be used to achieve $(\epsilon,\delta)$-differential privacy from Lemma~\ref{lemma26}.

% \rd{Definition 2.8. Given a real-valued random variable $Y$ and a number $\delta \in(0,1)$, let $\rho_\delta(Y)$ be the $(1-\delta)$ quantile of $Y$, that is, the least solution to $\operatorname{Pr}\left(Y \leq \rho_\delta\right) \geq 1-\delta$.}

\subsection{Additional Lemmas aiding proof of Theorem \ref{caucy}}\label{derivative} 
\begin{lemma}\label{lem:kdist}
Let dataset $\mathcal{S} \in \mathcal{D}$, containing two groups with sizes $N_0(\mathcal{S})$ and $N_1(\mathcal{S})$, such that $N_0(\mathcal{S}) \leq N_1(\mathcal{S})$. The maximum local sensitivity at distance $k$ can be expressed as follows:
   $$ \max_{\bar{S}:d(S,\bar{S})=k} \Delta^{local}_{SP}(\bar{S})
 =  \frac{m}{N_1(\mathcal{S})+k+1}{+}\frac{m}{N_0(\mathcal{S})-k}$$.
\end{lemma}
\begin{proof}

First, we will show that the local sensitivity  $\Delta^{local}_{SP}$ is 

   $$\Delta^{local}_{SP}(\mathcal{S})=\max_{S':d(S,S')=1} |SP(\mathcal{S})-SP(\mathcal{S}')|=\frac{m}{N_1(\mathcal{S})+1}+\frac{m}{N_0(\mathcal{S})}$$.

Let $\lambda=\sum_{\{j|a_j=1\}} h_i(x_j)$ and $\mu=\sum_{\{j|a_j=0\}} h_i(x_j)$. A neighboring dataset $\mathcal{S'}$ is one such that the protected attribute differs for only one individual, i.e., $d(S,S')=1$. Consider these two cases.

 \noindent  Case 1: An individual in the disadvantaged group, which the model predicted an unfavorable outcome, differs in the neighboring dataset in its protected attribute, i.e., $a_j=0$ with $h(x_j)=0$ becomes $a'_j=1$ with $h(x_j)=0$.

\begin{align*}
 & |SP(\mathcal{S})-SP(\mathcal{S}')|\stackrel{\text{(a)}}{=} \bigg | \frac{\lambda}{N_1(\mathcal{S})} -\frac{\mu}{N_0(\mathcal{S})}- \bigg (\frac{\lambda}{N_1(\mathcal{S})+1}-\frac{\mu}{N_0(\mathcal{S})-1}\bigg)\bigg| \\
 & = \frac{\lambda}{N_1(\mathcal{S})(N_1(\mathcal{S})+1)}+\frac{\mu}{N_0(\mathcal{S})(N_0(\mathcal{S})-1)}
 \stackrel{\text{(b)}}{\leq}  \frac{N_1}{N_1(\mathcal{S})(N_1(\mathcal{S})+1)}+\frac{N_0(\mathcal{S})-1}{N_0(\mathcal{S})(N_0(\mathcal{S})-1)} \\
 & = \frac{1}{N_1(\mathcal{S})+1}+\frac{1}{N_0(\mathcal{S})}.
\end{align*}
where (a) holds since the size of the disadvantaged group decreased by one and that of the advantaged group increased. $\lambda, \mu$ was not incremented because $h(x_j)=0$. Next, (b) holds since $\lambda$ is upper bounded by $N_1$, while $\mu$ is upper bounded by $N_0-1$ since $h(x_j)=0$.

 \noindent  Case 2: An individual in the disadvantaged group, which the model predicted a favorable outcome, differs in the neighboring dataset in its protected attribute, i.e., $a_j=0$ and $h(x_j)=1$ becomes $a'_j=1$ and $h(x_j)=1$. Steps follow similar arguments with case 1.

\begin{align*}
 & |SP(\mathcal{S})-SP(\mathcal{S}')|  \stackrel{\text{(a)}}{=} \bigg | \frac{\lambda}{N_1(\mathcal{S})} -\frac{\mu}{N_0(\mathcal{S})}- \bigg(\frac{\lambda+1}{N_1(\mathcal{S})+1}-\frac{\mu-1}{N_0(\mathcal{S})-1}\bigg)\bigg| \\
 & =  \frac{N_1(\mathcal{S})-\lambda}{N_1(\mathcal{S})(N_1(\mathcal{S})+1)}+\frac{N_0(\mathcal{S})-\mu}{N_0(\mathcal{S})(N_0(\mathcal{S})-1)}
  \stackrel{\text{(b)}}{\leq} \frac{N_1(\mathcal{S})}{N_1(\mathcal{S})(N_1(\mathcal{S})+1)}+\frac{N_0(\mathcal{S})-1}{N_0(\mathcal{S})(N_0(\mathcal{S})-1)}\\
  & = \frac{1}{N_1(\mathcal{S})+1}+\frac{1}{N_0(\mathcal{S})}.
\end{align*}

 For one query, the maximum $SP$ change between $\mathcal{S}$ and any neighboring dataset $\mathcal{S'}$ is upper bounded by $1/(N_1+1)+1/N_0$. Therefore with $m$ queries, we have the local sensitivity $\Delta^{local}_{SP(\mathcal{S})}=\frac{m}{N_1(\mathcal{S})+1}+\frac{m}{N_0(\mathcal{S})}$.

 Now, we find the maximum local sensitivity over datasets that differ by $k$ entries (maximum local sensitivity at distance $k$). 
\begin{align*}
\max_{\bar{S}:d(S,\bar{S})=k} \Delta LS_{SP}(\bar{S})
&= \max_{\bar{S}:d(S,\bar{S})=k} \max_{\bar{S}':d(\bar{S},\bar{S}')=1} |SP(\bar{\mathcal{S}})-SP(\bar{\mathcal{S}}')|\\
 & = \max_{\bar{S}:d(S,\bar{S})=k}\frac{m}{N_1(\bar{\mathcal{S}})+1}+\frac{m}{N_0(\bar{\mathcal{S}})}  \\ & \stackrel{\text{(a)}}{=} \frac{m}{N_1(\mathcal{S})+k+1}{+}\frac{m}{N_0(\mathcal{S})-k}.
\end{align*}

The last equality (a) holds since the maximum when considering datasets at distance $k$ is attained when the protected attribute of $k$ individuals in the disadvantaged group is moved to the advantaged group.
\end{proof}
\begin{remark}
Similarly, when $N_0(\mathcal{S}) \geq N_1(\mathcal{S})$, the maximum local sensitivity at distance $k$ can be expressed as follows:   $$\max_{\bar{S}:d(S,\bar{S})=k} \Delta^{local}_{SP}(\bar{S})
 =  \frac{m}{N_0(\mathcal{S})+k+1}{+}\frac{m}{N_1(\mathcal{S})-k}.$$
\end{remark}
\begin{lemma}\label{lem:maxb} Given $m,\beta,N_0,N_1>0$, with $N_0 \leq N_1$, the maximum of the expression occurs at the boundaries $k=\{0,N_0-2\}$, i.e.,

 $$ \max_{k=0,1,...,N_0-2} e^{-k\beta} \bigg( \frac{m}{N_1+k+1}+\frac{m}{N_0-k}\bigg)= \max \bigg( \frac{m}{N_1+1}+\frac{m}{N_0}\; ,\; e^{-(N_0-2)\beta}\left(\frac{m}{n-1}+\frac{m}{2}\right) \bigg). $$
 
 \end{lemma}
 \begin{proof}
 To prove the maximum of the function is always at its boundaries, it suffices to show the function is convex on that interval. We assume the function is continuous in $k$ and show the double derivative is greater than zero.
 % $$ \max_{k=0,1,...,N_0-2} e^{-k\beta} \bigg( \frac{m}{N_1+k+1}+\frac{m}{N_0-k}\bigg) $$
 \begin{align*}
\frac{d^2}{d^2k} e^{-k\beta} \bigg( \frac{m}{N_1+k+1}+\frac{m}{N_0-k}\bigg) = \beta^2 e^{-k\beta}\left( \frac{m}{N_1+k+1}+\frac{m}{N_0-k}\right)\\
 + e^{-k\beta}\left( \frac{2m}{(N_1+k+1)^3}+\frac{2m}{(N_0-k)^3}\right)\\
  +2\beta e^{-k\beta}\left( \frac{m}{(N_1+k+1)^2}-\frac{m}{(N_0-k)^2}\right) \stackrel{\text{(a)}}{>} 0
 \end{align*}
To prove the derivative is greater than zero, notice that,
\begin{align}
    \frac{\beta^2}{(N_0-k)}+\frac{2}{(N_0-k)^3} -  \frac{2 \beta}{(N_0-k)^2} > 0
\end{align}
This holds because the quadratic equation has a discriminant $\frac{4 \beta^2}{(N_0-k)^4} - \frac{8 \beta^2}{(N_0-k)^4} < 0$. This implies that the parabola is either entirely above the $x$-axis.

Hence,  for $m,\beta,N_0,N_1>0$, and $k=0,1,...,N_0-2$, inequality (a) holds proving strict convexity. The maximum of a strictly convex function occurs at one of the extreme points $k=\{0,N_0-2\}$.
 % $$ \max \bigg( \frac{m}{N_1+1}+\frac{m}{N_0}\; ,\; e^{-(N_0-2)\beta}\left(\frac{m}{n-1}+\frac{m}{2}\right) \bigg)$$
\end{proof}

\subsection{$\beta$-Smooth Sensitivity of Absolute Statistical Parity}\label{AAS}

\begin{lemma}
The $\beta$-smooth sensitivity of the absolute statistical parity gap $|SP|$ query is given as:
$$\Delta^{smooth}_{|SP|,\beta}(\mathcal{S})=\max \bigg(\frac{m}{N_0} , \frac{me^{-(N_0-2)\beta}}{2} \bigg),$$ 
where $N_0$ and $N_1$ are sizes of disadvantaged and advantaged groups in dataset $\mathcal{S}$, such that $N_0 \leq N_1$.
\end{lemma}
\begin{proof}

 \begin{align*}
 \Delta^{smooth}_{|SP|,\beta}(\mathcal{S}) & \stackrel{\text{(a)}}{=} \max_{\mathcal{\bar{S}} \in \mathcal{D} } \bigg(\Delta LS_{|SP|}(\mathcal{\bar{S}}) \cdot e^{-\beta d(\mathcal{S},\mathcal{\bar{S}})}\bigg) \\
 & \stackrel{\text{(b)}}{=} \max_{k=0,1,...,N_0-2} e^{-k\beta} \bigg( \max_{\bar{S}:d(S,\bar{S})=k} \Delta LS_{SP}(\bar{S})\bigg)\\
  & \stackrel{\text{(c)}}{=} \max_{k=0,1,...,N_0-2} e^{-k\beta} \bigg(\frac{m}{N_0-k}\bigg) \\
  & \stackrel{\text{(d)}}{=} \max \bigg(\frac{m}{N_0} , \frac{me^{-(N_0-2)\beta}}{2} \bigg)
 \end{align*}
 Here, (a) follows from the definition of smooth sensitivity in \eqref{def:SS}. Next, (b) is the expression of the smooth sensitivity when looking at datasets at distance $k$ (for details see \cite[Def 3.1]{10.1145/1250790.1250803}). To obtain (c) we find the maximum local sensitivity over datasets $k$ distance away (similar to Lemma \ref{lem:kdist} in Appendix~\ref{derivative}). Finally, (d) holds since the function is convex, and hence its maximum occurs at the boundaries $k \in \{0, N_0-2\}$ (similar to Lemma \ref{lem:maxb} in Appendix~\ref{derivative}).
 \end{proof}
\section{Additional Experimental Results}\label{expref}
In this section, we provide more experimental results to validate our theoretical findings. \\

\textbf{Performance metrics} 
\begin{enumerate}
    \item  Average error in answering the Statistical Parity query (Avg. SP Err) 
    
    \item  Leakage: Accuracy of correctly recovering the protected attribute balanced across both races (see Definition \ref{leak}).
% \begin{definition}[Leakage(\%)]
% \label{leak} Let ${N_A}$ be number of individuals in the advantaged group whose protected attribute was correctly predicted and ${N_B}$ be number of individuals in the disadvantaged group whose protected attribute was correctly predicted.  The protected attribute leakage is defined as;
% $$\text{Leakage}= \frac{1}{2} \left( \frac{{N_A}}{{N_1 }}+\frac{{N_B}}{{N_0}}\right) \times 100$$
%  The leakage is the balanced accuracy of recovery. This is used to deal with imbalanced data, i.e. one target class appears a lot more than the other.
% \end{definition}
\item  Number of correctly recovered protected attributes $N_A+N_B$.
\end{enumerate}

\noindent \textbf{Experiments with German Credit dataset: }
The German credit dataset has 20 attributes for 1000 loan applicants. The classification target is whether an individual has a Good or Bad credit risk~\cite{Dua:2019a}. The protected attribute gender is excluded from training and only used for statistical parity evaluation. In the case the whole dataset is used for fairness evaluation, we have $N_1=690$ and $N_0=310$. We train a base model $h_0(\cdot)$ for classification. Our model achieved a reasonable accuracy of $73.70\%$.  For the other $m$ models, we add a small noise sampled from Uniform$(-0.1,0.1)$ distribution to each output of the base model. They had a mean accuracy of $73.44\%$ and standard deviation of $0.3942$.

\begin{table}[h]

\centering
\caption{Avg. SP err and No. of recovered protected attribute for different values of $\epsilon$ and $m$.}
\begin{tabular}{ccccc}
\hline
$\epsilon$ & \multicolumn{2}{l}{Avg. SP err ($\times 10^{-3}$)}        & \multicolumn{2}{l}{No of recovered protected attribute} \\ \cline{2-5} 
                            & $m = 800$             & $m = 1000$            & \;\;$m = 800$                  & $m = 1000$                 \\ \hline
10                          & 133.7   & 112.9   & 558                        &\;\; 492                        \\
100                         & 6.1    & 6.3  & 557                        &\;\; 500                        \\
1000                        & 0.5 & 0.7 & 657                        &\;\; 516                        \\ \hline
\end{tabular}
\label{table1}
\end{table}

\begin{remark} We note that it is unreasonable to expect that no protected attribute would be recovered correctly. This is because even if one tries to guess the protected attributes based on a random coin flip, they will accurately recover at least some protected attributes. The goal of privacy is to rather ensure that the protected attribute of no \emph{targeted} individual is leaked with certainty. We are therefore interested in privatizing the protected attributes so that the performance is not better than random guessing, i.e., getting $50\%$ recovery accuracy.
\end{remark}

 To demonstrate the effectiveness Attribute-Conceal in protecting the protected attribute,  we use the mechanism in Theorem~\ref{caucy} to achieve $\epsilon$-DP (see Algorithm~\ref{algo:conceal}). Figure~\ref{fig2} (a) shows the number of protected attributes recovered using CS when noise is added to the statistical parity queries. Unlike the noiseless case, recovery is no longer possible even with more models. In Figure~\ref{fig2} (b) and (d), we plot the recovered  $\bar{s}$ vector and the answered statistical parity gap queries with $\epsilon=1000$. We experiment with different values of $\epsilon$ to study the trade-off between average statistical parity query error and recovery performance. Our results are summarized in Table \ref{table1}. By increasing $\epsilon$, there is a decrease in the average SP query error without a significant increase in number of protected attributes recovered. We show this for $m=800$ and $m=1000$ models.  The model developers generally do not have control over the size of the advantaged or disadvantaged group. This dataset has a $70:30$ male to female ratio. Despite the fact that $N_0$ is not substantially less than $N_1$, and our sensing matrix is uniformly distributed, full recovery of protected attributes needed fewer models. 
 
\begin{figure*}[h]
\begin{minipage}[b]{0.22\textwidth}
\includegraphics[scale=0.2]{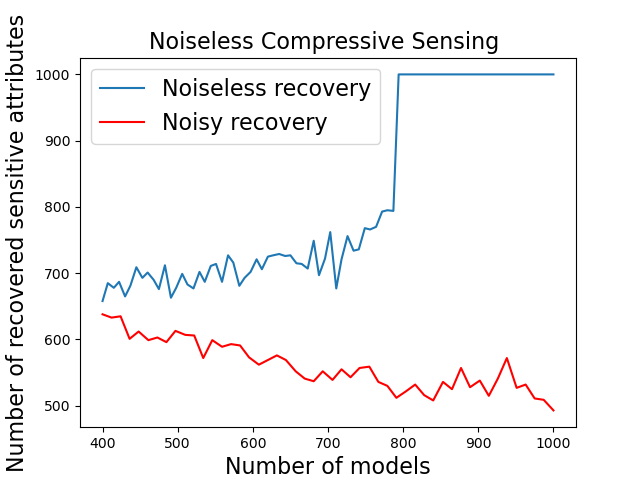}
\end{minipage}
\begin{minipage}[b]{0.29\textwidth}
\includegraphics[scale=0.23]{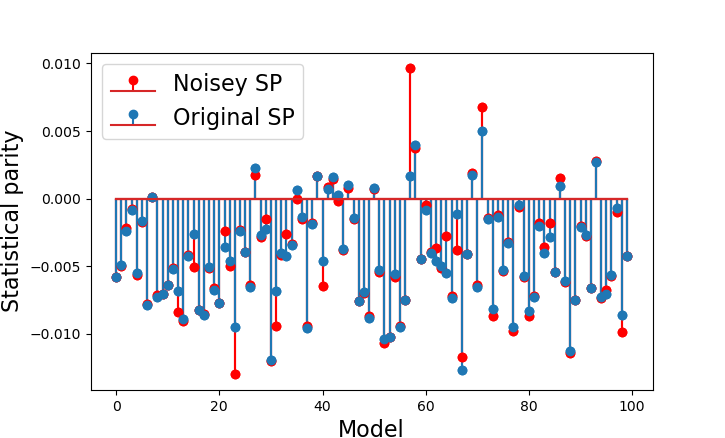}
\end{minipage}
\begin{minipage}[b]{0.23\textwidth}
\includegraphics[scale=0.23]{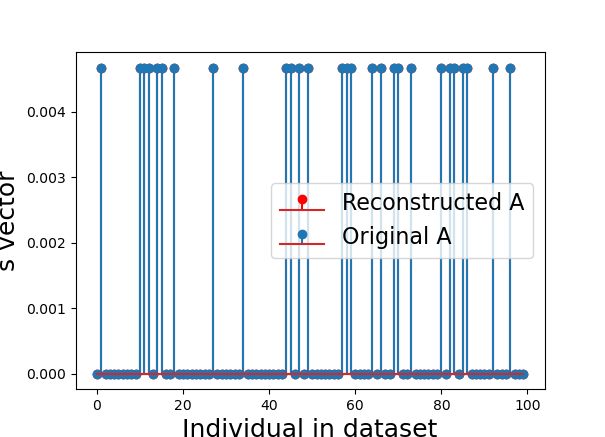}
\end{minipage}
\begin{minipage}[b]{0.17\textwidth}
\includegraphics[scale=0.21]{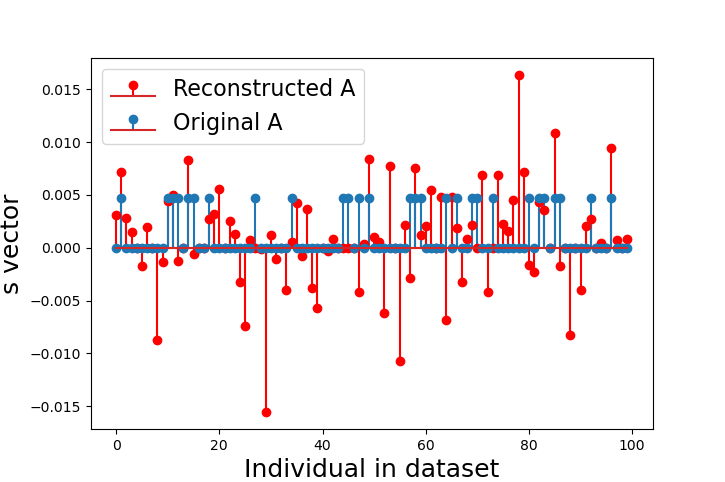}
\end{minipage}
\caption{(a) Number of recovered protected attributes as a function of number of models with and without Attribute-Conceal ($\epsilon=1000$). Perfect recovery can be achieved by using $800$ or more models for noiseless case. (b) The original and answered statistical parity (SP) queries for $m=800$ models  under $\epsilon=1000$ with an average error of $4.78 \times 10^{-4}$.
(c) Recovered $\bar{s}$ vector used to infer the protected attribute $A$ for $m=800$. There is an overlap between original and recovered $\bar{s}$ vector. (d) The original and reconstructed $\bar{s}$ vector with Attribute-Conceal, $\epsilon=1000$. (First $100$ individuals and models are shown for clarity).}
\label{fig2}
\end{figure*}

\end{document}